\ifdefined\pdfobjcompresslevel
\fi
\PassOptionsToPackage{table}{xcolor}
\documentclass[manuscript,screen,nonacm]{acmart}

\setcopyright{none}
\renewcommand{\footnotetextcopyrightpermission}[1]{}
\usepackage[utf8]{inputenc}
\usepackage[T1]{fontenc}
\usepackage{amsmath,amsthm,bm}
\usepackage{booktabs,tabularx,array,longtable,ragged2e,multirow,makecell}
\usepackage{graphicx}
\usepackage{nicefrac}
\usepackage{microtype}
\usepackage{enumitem}
\usepackage{xurl}
\usepackage{float}
\usepackage{listings}
\usepackage{placeins}
\usepackage{needspace}
\usepackage{pifont}

\hypersetup{
  colorlinks=false,
  pdfauthor={Jian Chen and Zixuan Yuan},
  pdftitle={When Evidence Changes the Subject: Subject-Typed Claim Licensing for Learned Routing}
}

\newcommand{\suite}{\textsc{SCOPE-Routing}}
\newcommand{\claims}{\textsc{ClaimScope-120}}
\newcommand{\real}{\textsc{RouteScope-Lite}}

\newcommand{\DataSP}{\textsc{DataSP}}
\newcommand{\FrontierCO}{\textsc{FrontierCO}}

\newcommand{\promoted}{\textsc{supported}}
\newcommand{\narrowed}{\textsc{narrowed}}
\newcommand{\rejected}{\textsc{unsupported}}
\newcommand{\diagnostic}{\textsc{diagnostic}}
\newcommand{\offthesis}{\textsc{out-of-contract}}

\newcommand{\tighttable}{\setlength{\tabcolsep}{3.6pt}\renewcommand{\arraystretch}{1.12}}
\newcommand{\verytighttable}{\setlength{\tabcolsep}{2.5pt}\renewcommand{\arraystretch}{1.06}}

\newcommand{\SubmissionSnapshot}{\texttt{core\_snapshot\_20260505}}
\newcommand{\ExtensionSnapshot}{\texttt{extension\_snapshot\_20260728}}

\definecolor{promotecolor}{RGB}{220,242,214}
\definecolor{narrowcolor}{RGB}{255,244,204}
\definecolor{rejectcolor}{RGB}{252,220,220}
\definecolor{lightgray}{RGB}{246,246,246}

\newcolumntype{Y}{>{\RaggedRight\arraybackslash}X}
\newcolumntype{C}[1]{>{\Centering\arraybackslash}p{#1}}
\newcolumntype{M}[1]{>{\Centering\arraybackslash}p{#1}}
\newcolumntype{P}[1]{>{\RaggedRight\arraybackslash}p{#1}}
\newcolumntype{R}[1]{>{\RaggedLeft\arraybackslash}p{#1}}

\newtheorem{proposition}{Proposition}

\title[Subject-Typed Claim Licensing for Learned Routing]{When Evidence Changes the Subject: Subject-Typed Claim Licensing for Learned Routing}

\author{Jian Chen}
\orcid{1234-5678-9012}
\affiliation{%
  \institution{The Hong Kong University of Science and Technology (Guangzhou)}
  \city{Guangzhou}
  \country{China}
}
\email{jchen524@connect.hkust-gz.edu.cn}

\author{Zixuan Yuan}
\authornote{Corresponding author.}
\affiliation{%
  \institution{The Hong Kong University of Science and Technology (Guangzhou)}
  \city{Guangzhou}
  \country{China}
}
\email{zixuanyuan@hkust-gz.edu.cn}

\begin{document}

\begin{abstract}
Modern learned systems increasingly combine learned components with search, repair, or external solvers. Benchmarks often measure the resulting end-to-end system, while scientific claims may concern only one component, creating an attribution problem: evidence can fail to support the requested component-level claim while still supporting a positive conclusion about the larger system. Existing evidence-to-claim methods primarily calibrate claim strength. We argue that composite systems require a second dimension: \emph{scientific subject}. We address this problem with \emph{subject-typed claim licensing}, which separates weaker conclusions about the requested subject from positive but non-substitutive credit about another subject. We instantiate this idea in \suite{} for preference-conditioned multigraph routing. Non-authors reproducibly apply the declared semantics; held-out review yields fewer reference-relative upward deviations than unstructured review, while the difference from a strong evidence checklist remains unresolved; and a controlled routing study shows that score-optimal and claim-eligible methods can differ while valid hybrid-system credit is preserved.
These results motivate treating claim strength and scientific subject as distinct dimensions of evidence-based evaluation.
\end{abstract}

\keywords{scientific evaluation, claim licensing, composite systems, learned routing, evidence attribution}

\maketitle

\section{Introduction}
\label{sec:intro}

Modern learned systems are increasingly composite: a learned component may operate with search or repair \citep{li2025destroyrepair}, retrieval-augmented or modular language-processing pipelines \citep{chen2024fintextqa,chen2025dekeynlu}, or downstream experts \citep{ghoummaid2024act,mao2024multiexpert}, while benchmark evaluation measures the behavior of the complete executed system. This creates a scientific attribution problem. As Figure~\ref{fig:main} illustrates, two routers can achieve similar end-to-end regret even when one solves requests through its learned policy and the other relies heavily on a backup solver. The scores may be similar, but the evidence supports claims about different subjects.

This distinction creates two forms of \emph{partial support}. Evidence may support a weaker conclusion about the \emph{same scientific subject}---for example, static rather than dynamically robust learned routing \citep{xiao2024robustrouting,heakl2025svrpbench}. Or it may support a positive conclusion about a \emph{different subject}---for example, the backup-enabled system rather than the learned policy. The first changes claim strength; the second changes what the supported claim is about. Collapsing the two can either misattribute system-level evidence to a component or discard genuine positive evidence about the system.

\begin{figure}[t]
    \centering
    \includegraphics[width=\textwidth]{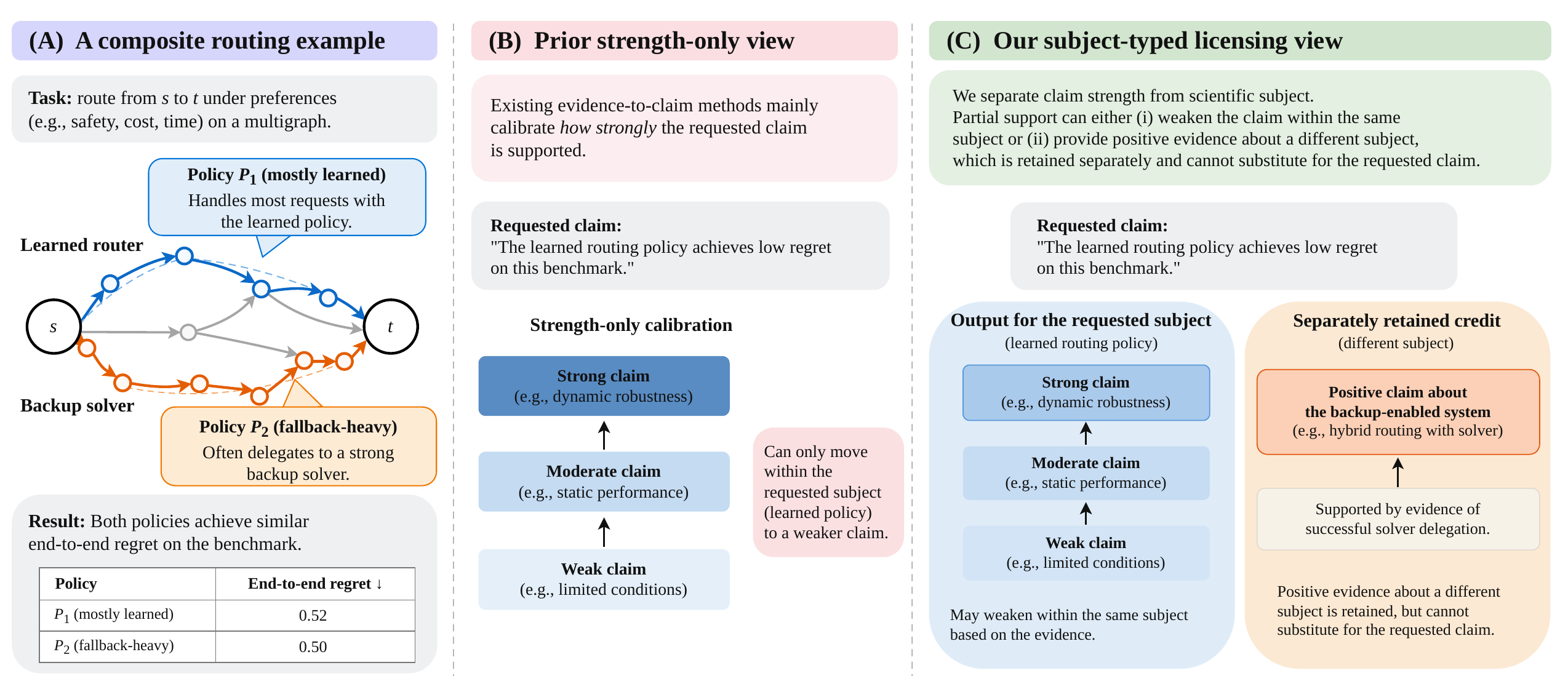}
    \Description{A three-panel comparison of a composite learned-routing example, a strength-only claim view, and subject-typed licensing with separate requested-subject support and retained hybrid-system credit.}
    \caption{When evidence changes the subject. A fallback-heavy router illustrates the difference between strength-only calibration and subject-typed licensing. Strength-only calibration can only move along the requested-subject ladder; subject-typed licensing additionally retains positive evidence about the backup-enabled hybrid without allowing that evidence to support the standalone learned-router claim.}
    \label{fig:main}
\end{figure}

Recent work has made evidence-to-claim reasoning increasingly explicit. Benchmark-assessment and metadata standards expose evaluation conditions, lifecycle, and scope \citep{reuel2024betterbench,akhtar2024croissant}; recent claim-verification methods assess whether designated statements are grounded in evidence, including claims drawn from scientific literature \citep{tang2024minicheck,jansen2025matteroffact}; and evidence-licensed calibration and claim replay formalize how evidence constrains the strength of reportable claims \citep{li2026calibration}. These approaches clarify \emph{how strongly} a claim is supported. Composite systems expose an additional representational question: \emph{which scientific subject is supported?} When positive evidence concerns another component or the executed system, it need not lie on the weaker-to-stronger scale of the originally requested component claim.

We address this problem with \emph{subject-typed claim licensing}. The central idea is to separate claim strength from scientific subject. A requested claim may weaken only within the same subject, while positive evidence about a related but incomparable subject is retained separately and cannot substitute for evidence about the requested subject. We instantiate this semantics in \suite{} for preference-conditioned multigraph routing. Routing is a useful testbed because repair, multi-expert solver composition, edge identity, preference coverage, and dynamic response make attribution and claim scope directly observable \citep{li2025destroyrepair,pan2025moses,rydin2025beyond,chen2025rethinking,xiao2024robustrouting,heakl2025svrpbench}.

We evaluate the approach along the path from scientific text to scientific decision. First, we test whether non-authors can reconstruct the required claim--evidence objects and apply a frozen contract consistently. Second, we compare held-out expert judgments under careful review, a strong evidence checklist, and subject-typed licensing. Third, we compare score-only method selection with selection conditioned on a requested scientific claim. Non-authors reproducibly apply the declared semantics; subject-typed review yields fewer reference-relative upward deviations than unstructured review, while its difference from the checklist remains unresolved; and the controlled routing study shows that score minimization and target-claim eligibility can select different policies while preserving valid hybrid-system credit.

\paragraph{Contributions.}
Conceptually, we identify scientific subject as a distinct dimension of evidence-relative claim interpretation. Formally, we introduce a subject-typed licensing semantics that separates same-subject narrowing from non-substitutive cross-subject credit. Empirically, we study its reconstruction, execution, reviewer consequences, and claim-conditioned selection in learned routing.

\section{Related Work}
\label{sec:related_work}
\label{sec:positioning}

\paragraph{Evidence records, validity, and claim support.}
Evaluation methodology first determines what evidence is available and what interpretations that evidence can sustain. Decision-quality metrics and stress tests characterize observed behavior, while datasheets, model cards, BenchmarkCards, behavioral evaluation, benchmark-assessment frameworks, and reproducibility standards document conditions, intended uses, artifacts, and failure modes \citep{gebru2021datasheets,mitchell2019model,ribeiro2020beyond,pineau2021improving,sokol2024benchmarkcards,reuel2024betterbench}. Validity-centered approaches ask which interpretations of observed measurements are warranted, and structured assurance approaches make claims, assumptions, evidence, and support relations explicit \citep{kane2013validating,rhodes2010software,freiesleben2025benchmarking,salaudeen2025measurement,bean2025measuring}. Recent claim-assessment systems additionally retrieve evidence, evaluate claim--evidence relations, or estimate support and overstatement for designated scientific statements \citep{javaji2025claimbench,james2026rigourate,xu2026factreview}.

\paragraph{Evidence-licensed calibration and claim replay.}
Evidence-licensed calibration studies which assertion strength is justified by an evidence record, while claim replay makes evidence-relative inference executable over frozen evaluation records \citep{li2026calibration}. We take this claim-relative perspective as our starting point. Our focus is the output semantics induced by composite systems. When evidence about a requested component also licenses a conclusion about another component or the executed system, those conclusions have different scientific roles: one answers the requested question, while the other is valid positive credit about a different subject. Prior claim-relative frameworks can in principle be extended with additional typed conclusions; subject-typed licensing makes this distinction explicit through subject-preserving claim strength, a requested-subject output, relation-labeled retained credit, and a non-substitution rule.

\paragraph{Learned routing as an attribution testbed.}
Neural combinatorial optimization and decision-focused learning evaluate downstream decision quality, feasibility, robustness, and efficiency \citep{bello2016neural,kool2018attention,elmachtoub2022smart,mandi2024decision,zhou2024collaboration}. Multiobjective and multigraph routing additionally make preference coverage, action identity, and structural fidelity part of the evaluated object \citep{deb2002fast,chen2025rethinking,rydin2025beyond}. Repair, solver delegation, and stochastic requests create cases in which similar aggregate scores arise from different component contributions \citep{li2021delegate,li2025destroyrepair,heakl2025svrpbench}. Routing therefore provides request-level observations of attribution, task identity, coverage, dynamics, and intervention. Appendix~\ref{app:extended_related} gives feature-level comparisons with adjacent evaluation and claim-support frameworks.

\section{Subject-Typed Claim Licensing}
\label{sec:evaluator}

We now formalize the distinction in Figure~\ref{fig:main}. Let $\mathcal C$ be a finite claim set, $\mathcal S$ scientific subjects, and $\mathcal U$ evidence sources. A subject map $\sigma:\mathcal C\to\mathcal S$ identifies what each claim is about, and $\preceq$ orders claim strength. A contract
\begin{equation}
\begin{aligned}
\mathcal K&=(\mathcal C,\sigma,\preceq,\Omega,\mathcal O,\mathcal A,\mathcal B,\Pi),\\
c_1\preceq c_2&\Rightarrow
\bigl[\sigma(c_1)=\sigma(c_2),\ \mathcal O(c_1)\subseteq\mathcal O(c_2)\bigr],\\
d\in\mathcal B(c,s)&\Rightarrow d\parallel c.
\end{aligned}
\end{equation}
contains evidence-obligation types $\Omega$, claim requirements $\mathcal O(c)$, the downward-closed claims $\mathcal A(s)$ admissible from source $s$, retained-credit candidates $\mathcal B(c,s)$, and an outcome policy $\Pi$. Thus weakening stays within a scientific subject, whereas $\mathcal B$ preserves licensed conclusions outside the requested weakening branch; when $\sigma(d)\neq\sigma(c)$, such credit is cross-subject.

For evidence record $E=(s,T,q,M)$ (source, telemetry, comparator, and adjudication record), each obligation resolves to \textsc{pass}, \textsc{fail}, \textsc{missing}, or \textsc{unresolved}; the main policy licenses only \textsc{pass}. Writing $\bar o_\Pi(E)\in\{0,1\}$ for this resolved outcome, define
\begin{equation}
\begin{aligned}
\mathcal L_{\mathcal K}(E)
&=\{c\in\mathcal A(s):\bar o_\Pi(E)=1\ \forall o\in\mathcal O(c)\},\\
\mathcal C^\star
&=\operatorname{Max}_{\preceq}\{c\in\mathcal L_{\mathcal K}(E):c\preceq c_{\rm req},\
\ \sigma(c)=\sigma(c_{\rm req})\},\\
\mathcal R^\star
&=\operatorname{Max}_{\preceq}\bigl(\mathcal L_{\mathcal K}(E)\cap\mathcal B(c_{\rm req},s)\bigr).
\end{aligned}
\end{equation}

$\mathcal C^\star$ answers the requested scientific question; $\mathcal R^\star$ preserves positive credit outside that branch. The requested status is \promoted{} if $c_{\rm req}\in\mathcal C^\star$, \narrowed{} if $\mathcal C^\star\neq\varnothing$ but excludes $c_{\rm req}$, and \rejected{} otherwise. A failure record $F$ stores source mismatch and nonpassing requested-claim obligations, giving output $\Gamma_{\mathcal K}(c_{\rm req},E)=(\mathcal C^\star,\mathcal R^\star,F,z)$. Crucially, $z$ depends only on $\mathcal C^\star$.

Two properties clarify the representation. First, identical resolved evidence can license different scopes under different coherent contracts because the contract determines how evidence requirements attach to claims. Second, licensed retained credit is disjoint from the requested weakening branch and therefore cannot substitute for requested-subject support. Appendix~\ref{app:execution_properties} gives the constructions and proofs, together with downward closure, maximal coverage, and monotonicity under licensing-favorable evidence refinement.

\section{Routing Instantiation}
\label{sec:routing_instantiation}

\subsection{Routing evidence and attribution}

We instantiate the semantics for preference-conditioned routing on directed multigraphs. At time $t$, a router receives a graph
\[
G_t=(V_t,E_t)
\]
and request
\[
r_t=(s_t,d_t,x_t,\phi_t,H_t),
\]
where $s_t,d_t$ are terminals, $x_t$ is context, $\phi_t\in\Delta^{K-1}$ is a preference vector over $K$ objectives, and $H_t$ denotes hard constraints. Parallel actions retain edge identity $e=(u,v,b)$. With vector edge costs $j_e(t,r_t)$, path cost is
\[
J(P;r_t)
=
\sum_{e\in P}j_e(t,r_t).
\]

Let $\mathcal F(G_t,r_t)$ be the feasible path set. The exact oracle is
\[
P_t^\star(\phi_t)
=
\arg\min_{P\in\mathcal F(G_t,r_t)}
\phi_t^\top J(P;r_t),
\]
and regret of returned path $\widehat P_t$ is
\begin{equation}
\operatorname{Regret}(\widehat P_t)
=
\phi_t^\top J(\widehat P_t;r_t)
-
\phi_t^\top J(P_t^\star;r_t).
\label{eq:routing_regret_new}
\end{equation}

Regret measures downstream decision quality, but it does not determine which scientific subject produced that quality. Low end-to-end regret may arise from the learned primary policy, repair, solver delegation, or their combination. Claim scope therefore additionally depends on attribution, task identity, preference coverage, dynamics, source scope, and provenance.

\subsection{Learned-primary claim branch}

The requested learned-primary family is the cumulative chain
\[
\mathcal C_{\rm lp}
=
\{C0\prec C1\prec\cdots\prec C6\},
\]
with $\sigma(C_i)=\mathsf{learned\mbox{-}primary}$ for every $C_i\in\mathcal C_{\rm lp}$.

The complete contract also contains typed families for backup-enabled hybrid conclusions, source-native or bridge conclusions, and diagnostic conclusions:
\begin{equation}
\mathcal C
=
\mathcal C_{\rm lp}
\cup
\mathcal C_{\rm hyb}
\cup
\mathcal C_{\rm src}
\cup
\mathcal C_{\rm diag}.
\label{eq:routing_claim_universe_new}
\end{equation}
Claims in these other families may be retained in $\mathcal R^\star$, but they do not lie on the learned-primary weakening branch.

\begin{table}[t]
\centering
\caption{Learned-primary claim branch used in the routing studies. Each level inherits lower-level requirements; primary-policy attribution and adjudication-record closure are branch-common.}
\label{tab:claim_ladder}
\verytighttable
\small
\begin{tabularx}{\textwidth}{P{0.08\textwidth}P{0.31\textwidth}Y}
\toprule
Level & Licensed learned-primary conclusion & Incremental evidence \\
\midrule
$C0$ & Scalar decision quality & Primary-only or intervention-free regret against a declared comparator. \\
$C1$ & Feasibility-audited routing & Exact feasibility and invalid-route accounting. \\
$C2$ & Identity-preserving multigraph routing & Edge identity, bundle-aware actions, and collapse checks. \\
$C3$ & Preference-conditioned routing & Preference sampling, coverage/hypervolume, and held-out preference shift. \\
$C4$ & Intervention-audited constrained routing & Constraint masks, fallback/repair taxonomy, and primary versus end-to-end decomposition. \\
$C5$ & Dynamic robust routing & Drift stress, route-change evidence, tail regret, and latency. \\
$C6$ & Deployment-style support & Representative workloads, mechanism diagnosis, intended-use limits, and independent reproduction or usability. \\
\bottomrule
\end{tabularx}
\end{table}

For a requested learned-primary level $L_c$ and source-supported cap $L_s$,
\begin{equation}
L^\star
=
\max
\left\{
\ell\le \min(L_c,L_s):
C_\ell\in\mathcal L_{\mathcal K}(E)
\right\},
\label{eq:chain_level_new}
\end{equation}
with $L^\star=\bot$ when no learned-primary level is licensed. In this chain, $\mathcal C^\star$ is therefore empty or a singleton, while $\mathcal R^\star$ may contain positive conclusions from the hybrid, source-native, or diagnostic families.

\subsection{When good system performance changes subject}

Fallback provides a direct example. Consider a request for a $C5$ conclusion about a standalone learned-primary router. One controlled policy obtains the lowest end-to-end regret in the routing suite, but an exact backup solver rescues $86.1\%$ of requests. The observed aggregate performance therefore does not provide the primary-policy attribution required by the learned branch.

For this record, $\mathcal C^\star=\varnothing$ for the requested learned-primary claim. The same evidence is nevertheless informative about the executed system: end-to-end feasibility and safety-control conclusions are licensed for the backup-enabled hybrid and retained in $\mathcal R^\star$.

Thus the adjudication records two facts simultaneously: the evidence does not positively answer the requested standalone learned-policy question, and it does support a positive conclusion about the larger system. A branch-only weakening output would retain the first fact but not the second.

Claim Cards serialize this decomposition by recording the requested subject and claim, source, comparator, observed evidence, licensed requested conclusion, retained credit, typed failures, and provenance. Appendix~\ref{app:adjudicator_details} gives the full schemas, executable rules, and representative cards.

\section{Evaluation Design}
\label{sec:validation}

We evaluate subject-typed licensing along the path from scientific text to scientific decision. The studies address three questions.

\paragraph{RQ1: Reconstructability and execution.}
Can non-authors recover the claim--evidence objects required by the contract and apply the declared semantics consistently?

\paragraph{RQ2: Reviewer judgments.}
How does a shared subject-typed semantics change review judgments relative to unstructured careful review and a structured evidence checklist?

\paragraph{RQ3: Decision consequences.}
Can score-only ranking and selection conditioned on a requested scientific claim lead to different decisions over the same evidence records?

\subsection{Reconstructing and executing the contract}

The first study separates \emph{object reconstruction} from \emph{contract execution}.

For object reconstruction, two non-author coders independently inspect prespecified sections associated with 24 claims and recover comparator identity, source family, requested claim family, intended scope, and required evidence obligations. Coders are blind to the author encodings and adjudication outputs. Agreement therefore measures whether the objects consumed by the licensing procedure can be reconstructed from the underlying scientific text.

The fixed-contract study holds the evidence packets and contract constant. Four blinded annotators apply the learned-primary semantics to 120 claims in \claims{}. We measure exact agreement, ordinal agreement on the $C0$--$C6$ chain, agreement over evidence requirements, and agreement between executable adjudication and the contract-relative human majority. This separates reproducibility of the input representation from consistency of the declared inference procedure.

\subsection{Held-out reviewer comparison}

The reviewer comparison uses 40 claims from 20 additional public papers that did not contribute to construction of the learned-primary branch, its evidence requirements, examples, or \claims{}. Eight non-author experts evaluate the same prespecified abstract, method, experiment, results, and limitations sections under three conditions.

\paragraph{Careful review.}
Reviewers determine the strongest supported conclusion without a structured aid.

\paragraph{Evidence checklist.}
Reviewers receive structured prompts for comparator choice, feasibility, multigraph identity, preference coverage, fallback and attribution, dynamics, latency, source scope, artifact closure, and wording ambiguity, while writing the final conclusion themselves.

\paragraph{\suite{}.}
Reviewers receive the same evidence fields together with the declared learned-primary inheritance relation and maximal-selection semantics.

Each claim receives two independent judgments per condition, for 240 judgments in total, using a balanced incomplete-block assignment in which no reviewer encounters the same claim under multiple conditions. A prespecified pair-resolution procedure produces one claim-level summary for each condition while retaining both raw judgments, rationales, disagreement records, and timing fields.

Separately, three senior experts, blind to the learned-primary ladder and condition outputs, write in free form the strongest conclusion they consider supported by the supplied evidence. The panel resolves 36 of the 40 claims. Their consensus statements are mapped through the prespecified resolution procedure to form a common operational reference for the three reviewer conditions.

We report agreement with this reference together with reference-relative upward and downward deviations. Comparisons are paired by claim, and uncertainty intervals resample papers as clusters. Full prompts, assignments, resolution procedures, and reliability analyses appear in Appendix~\ref{app:heldout_review}.

\subsection{Controlled routing consequences}

\real{} studies how claim semantics interact with selection under a common evidence schema. The suite contains 1,500 requests spanning three semisynthetic domain families, 15 graph-family slices, 10 seeds, and 15 policy families. The policy inventory includes learned policies, classical comparators, negative controls, repair and fallback systems, preference-conditioned models, and dynamic policies that vary in attribution, multigraph identity, preference coverage, dynamic response, and latency.

We prespecify $C5$, dynamic robustness of the learned-primary router, as the target claim. Two decision rules are applied to the same learned-method rows. \emph{Score-only selection} ranks methods by aggregate regret. \emph{Claim-conditioned selection} first restricts attention to methods licensed for the target claim under the declared contract and then applies the prespecified quality rule within that eligible set.

This design separates two scientific questions: which method minimizes an observed score, and which method is supported for a particular scientific conclusion. Appendices~\ref{app:real_lite}--\ref{app:additional_results} provide the complete policy inventory, uncertainty analyses, source-scope audits, obligation-removal analyses, and alternative missing-evidence policies.

\section{Results}
\label{sec:results}

\subsection{RQ1: Reconstructability and execution}

\paragraph{Object reconstruction.}
The two non-author coders independently recover comparator identity for 23/24 claims, source family for 22/24, and requested claim family for 21/24. Intended-scope agreement reaches weighted $\kappa=0.78$, and mean Jaccard agreement over evidence requirements is $0.82$. Independently reconstructed final status agrees on 20/24 claims, and all status disagreements are within one learned-primary level.

The required objects are therefore largely recoverable from the prespecified paper sections, with the remaining disagreements concentrated near scope boundaries.

\paragraph{Fixed-contract execution.}
On \claims{}, four blinded annotators achieve mean pairwise exact agreement $0.761$, mean weighted $\kappa=0.703$, and ordinal Krippendorff's $\alpha=0.686$. \suite{} matches the contract-relative human majority exactly on 101/120 claims and within one level on 115/120.

The remaining exact disagreements occur primarily near adjacent coverage, attribution, dynamics, and latency boundaries. Appendix~\ref{app:external_corpus} reports complete reliability statistics and tie-policy sensitivity.

\subsection{RQ2: Reviewer operating points}

Table~\ref{tab:heldout_review_main} reports the 36 claims for which the independent expert panel produced a resolved operational reference.

\begin{table}[t]
\centering
\caption{Held-out reviewer comparison on 36 expert-resolved claims.}
\label{tab:heldout_review_main}
\tighttable
\begin{tabularx}{\textwidth}{P{0.34\textwidth}M{0.18\textwidth}M{0.18\textwidth}M{0.18\textwidth}}
\toprule
Outcome & Careful review & Evidence checklist & \suite{} \\
\midrule
Agreement with operational reference & 24/36 & 27/36 & 31/36 \\
Reference-relative upward deviation & 8/36 & 5/36 & 2/36 \\
Reference-relative downward deviation & 1/36 & 3/36 & 4/36 \\
Median time per claim & 18.1 min & 13.4 min & 15.8 min \\
\bottomrule
\end{tabularx}
\end{table}

Relative to careful review, the paired paper-clustered agreement difference for \suite{} is $19.4$ percentage points with a 95\% interval of $[5.6,33.3]$. Relative to the evidence checklist, the observed difference is $11.1$ percentage points with interval $[-2.8,25.0]$.

The directional results characterize a different observed review operating point. Upward deviations decrease from 8/36 under careful review to 5/36 under the checklist and 2/36 under \suite{}. Downward deviations increase from 1/36 to 3/36 and 4/36, respectively. Median review time is highest for careful review, lowest for the checklist, and intermediate for \suite{}.

To characterize what changes rather than only how often judgments differ, we inspect the 12/40 held-out claims for which subject-typed licensing changes the initial score-oriented interpretation. Eleven have resolved expert references: nine changes align with the reference and two are judged more conservative. The endorsed cases include loss of multigraph identity, missing preference-space coverage, and absence of dynamic-response evidence. Among eight narrowed cases, five retain positive lower-scope credit rather than collapsing to a purely negative conclusion.

\subsection{RQ3: Claim-conditioned decisions}

The controlled routing suite produces a direct separation between score minimization and target-claim eligibility.

Among learned policy families, \emph{Fallback-Heavy Safe} attains the lowest aggregate regret, $0.011091$, and is therefore selected by the regret-only rule. Its end-to-end performance, however, includes exact backup rescue on $86.1\%$ of requests. The learned-primary attribution required by the target $C5$ claim is therefore not licensed. Its positive hybrid-system feasibility and safety-control evidence is retained in $\mathcal R^\star$.

\emph{Shock-Safe Online Repair} has higher aggregate regret, $0.016189$, but satisfies the inherited evidence requirements for the target $C5$ learned-primary claim and is selected by the target-$C5$-then-regret rule.

\emph{Preference Hypernetwork} has aggregate regret $0.020865$ and stronger preference-space coverage, with hypervolume $0.849$, but remains at $C4$ because the $C5$ dynamic requirements are incomplete.

\begin{table}[t]
\centering
\caption{Selection depends on the scientific decision rule.}
\label{tab:selection_rules_main}
\verytighttable
\small
\begin{tabularx}{\textwidth}{P{0.20\textwidth}P{0.22\textwidth}M{0.10\textwidth}P{0.20\textwidth}Y}
\toprule
Decision rule & Selected row & Regret & Key evidence & Licensed interpretation \\
\midrule
Score only & Fallback-Heavy Safe & 0.011091 & 86.1\% backup rescue & Hybrid credit retained; no learned-primary $C5$. \\
Target $C5$, then regret & Shock-Safe Online Repair & 0.016189 & Dynamic and attribution evidence complete & $C5$ learned-primary. \\
Coverage-oriented diagnostic & Preference Hypernetwork & 0.020865 & HV $=0.849$ & $C4$; $C5$ dynamics incomplete. \\
\bottomrule
\end{tabularx}
\end{table}

Thus the selected method depends on the scientific question being asked. Score-only selection asks which row minimizes aggregate regret. Claim-conditioned selection asks which rows first provide evidence for the requested learned-primary conclusion and only then compares quality within that eligible set.

Across the shared learned-method set, the score-based and claim-conditioned orderings have Kendall $\tau_b=0.407$. Leave-one-obligation-out analysis additionally produces distinct conclusion changes associated with fallback attribution, multigraph identity and feasibility, preference coverage, route-change evidence, latency, source scope, and record closure. The principal rescue-attribution, source-transfer, and deployment-scope boundaries remain unchanged across the three declared missing-evidence policies. Appendix~\ref{app:additional_results} reports the complete analyses.

Across the three evaluation stages, the same semantic distinction appears at different levels. Non-authors can largely reconstruct and execute the declared claim objects; shared semantics changes which conclusions reviewers license from the same evidence; and the controlled routing suite shows that a score-optimal row need not be eligible for a requested component-level claim even when it retains positive system-level credit. The common object across these results is not a global method ranking, but the relation between an evidence record, a scientific subject, and the conclusion that the record supports.

\section{Implications and Scope}
\label{sec:implications}
\label{sec:scope}

\subsection{Claim strength and scientific subject}

Subject-typed licensing separates two questions that are often collapsed in benchmark interpretation: \emph{what can the evidence support about the requested scientific subject, and what other positive conclusions does the same evidence support?} The first question is answered by requested-subject licensing; the second by retained credit. Their separation allows evidence to remain informative without making it substitutable across components or system boundaries.

This distinction matters whenever reported performance is produced by a composition of mechanisms. A learned component may operate with repair, search, external tools, retrieval, or safety controls. In such systems, end-to-end behavior can be strong even when the evidence required to attribute that behavior to one component is incomplete. Subject-typed reporting preserves both observations: the component-level conclusion remains matched to its evidence, while positive system-level behavior remains visible.

A benchmark row can therefore remain scientifically positive while changing subject. Rather than pairing a method name with a scalar score alone, an evidence-qualified result connects a measured outcome to the subject to which that outcome is attributed, the evidence required for that conclusion, and the scope from which the evidence was obtained.

\subsection{Claim contracts as scientific objects}

A claim contract is \emph{normative, versioned, and contestable}. Its subject assignments, claim-strength relations, source rules, and evidence requirements encode a scientific position about what observations are sufficient for particular conclusions. Making these choices explicit does not remove scientific judgment; it makes part of that judgment inspectable.

This representation also localizes disagreement. Two evaluators may agree on the observed telemetry yet differ because they adopt different source boundaries, evidence requirements, or claim relations. Such disagreement is different from disagreement about the underlying measurements. A versioned contract makes the distinction visible and allows changes in scientific scope to be traced to explicit changes in claim semantics.

This perspective suggests a broader role for claim contracts in evaluation practice. Benchmark designers can expose the evidence required for intended interpretations; authors can associate reported results with the subject and scope they support; and reviewers can inspect the relation between observations and conclusions rather than reconstructing that relation implicitly for every paper.

\subsection{Empirical scope}

The empirical studies establish three results for the declared learned-routing semantics. First, non-author coders and annotators can recover and execute the required claim--evidence objects with substantial, though imperfect, agreement. Second, the held-out comparison shows that supplying shared claim semantics changes the observed review operating point relative to both unstructured review and a structured evidence checklist. Third, the controlled routing suite provides a mechanistic example in which score-only and target-claim decisions diverge while positive credit for the executed hybrid system is preserved.

These studies operate at different levels of evidence. \claims{} and the construction audit study reproducibility of interpretation from supplied paper sections; the reviewer experiment studies judgments on held-out public-paper claims; and \real{} studies controlled consequences under semisynthetic routing conditions. Public source cases additionally illustrate how native, bridge, diagnostic, and out-of-contract conclusions interact with source scope.

The resulting empirical picture is therefore one of \emph{representation, reproducibility, and controlled consequence}, rather than prevalence. The present studies do not estimate how often cross-subject attribution occurs throughout the ML literature, and the semisynthetic routing records do not constitute deployment evidence. Broader task sampling, independently developed contracts, and operational data would address those questions at a different empirical level.

\section{Conclusion}
\label{sec:conclusion}

Partial support in a composite learned system has more than one direction. Evidence can justify a weaker conclusion about the requested scientific subject, or it can justify a positive conclusion about a different component or system. Subject-typed claim licensing represents these outcomes separately through requested-subject support and non-substitutive retained credit.
The routing studies show that this distinction can be reconstructed and executed by non-authors, can alter the operating point of scientific review, and can separate score minimization from eligibility for a particular scientific claim. More broadly, the framework treats claim strength and scientific subject as distinct coordinates of evidence-based interpretation. Weakening answers what can still be said about the requested subject; retained credit records what else the evidence positively supports. Keeping both visible---without allowing one to answer for the other---is the central purpose of subject-typed claim licensing.

\subsection*{AI use statement}
Generative AI tools were used for literature discovery and editorial assistance with the wording, organization, and presentation of the manuscript. The authors independently read and verified the cited sources, developed and checked the formal framework, designed and conducted the studies, constructed and analyzed the empirical records, and interpreted the results. All AI-assisted text was reviewed by the authors, who take responsibility for the final manuscript, claims, code, data, and artifacts.

\subsection*{Ethics statement}
The human-subject component involved adult experts making professional judgments about scientific claims drawn from public papers. The protocol used no deception, collected no sensitive personal information, and releases no identifying annotator data. Participation, confidentiality, compensation, and data handling were disclosed to participants. The protocol received the applicable institutional determination and was classified as exempt; institution-identifying details are withheld during double-blind review. The artifact records anonymized assignments, consent and compensation documentation, conflict-of-interest checks, and the analysis protocol.

\subsection*{Reproducibility statement}
The submission artifact contains the adjudicator, claim contracts, schemas, Claim-Card exporters, analysis scripts, anonymized judgment records, locked same-contract rows, manifests, and reviewer commands. Section~\ref{sec:validation} summarizes the study design; Appendices~\ref{app:adjudicator_details}, \ref{app:validation_details}, \ref{app:external_corpus}, \ref{app:heldout_review}, \ref{app:real_lite}, and \ref{app:artifact} document the executable rules, data construction, statistical analyses, routing suite, and versioned reviewer path. The central \real{} selection result is reanalyzed from locked request--policy rows; the default path distinguishes fresh execution, cached reanalysis, and derived-only historical evidence.

\bibliographystyle{plainnat}
\bibliography{references}

\appendix

\section{Extended Positioning: Feature-Level Comparisons and Boundaries}
\label{app:extended_related}
\label{app:positioning}

Section~\ref{sec:positioning} traces prior work by its primary output along the path from measurement to reportable claim. This appendix provides the complementary feature-level view: it records the specific representational boundaries, composition roles, and adjudication outputs of adjacent approaches.

\paragraph{Argument-based validity and AI evaluation.}
Argument-based validity treats the interpretations and uses inferred from scores as the object to be justified \cite{kane2013validating}. Construct-validity and validity-centered AI evaluation frameworks apply the same principle to benchmark claims \cite{freiesleben2025benchmarking,salaudeen2025measurement,bean2025measuring}. \suite{} supplies an executable representation for one bounded decision in that process: a row-level requested claim, a declared strength order and source-admissibility relation, claim-indexed obligations, maximal supported elements, typed failures, retained incomparable credit, and an artifact-bound rationale.

\paragraph{Evidence-licensed calibration and claim replay.}
The calibration turn frames evidence-licensed assertion as a scientific operation and maps raw claims to maximal licensed frontiers \cite{li2026calibration}. Commit-bound claim replay makes the inference licensed by evaluation evidence executable and studies stability across evaluator-meaning families. Composite systems introduce two roles for licensed conclusions: answering the requested subject and preserving non-substitutive credit about another subject. The SCOPE contract represents these roles through $\mathcal C^\star$ and $\mathcal R^\star$, with source admissibility and component-level telemetry making the attribution decision executable. The routing studies evaluate reconstruction, review behavior, and selection consequences for this typed representation.

\paragraph{Structured assurance and claim verification.}
Structured assurance cases connect claims, subclaims, arguments, evidence, assumptions, and context in traceable graphs and permit top-down, bottom-up, or mixed construction \cite{rhodes2010software}. Their support edges describe an argument for a designated claim. CLAIM-BENCH evaluates claim-to-evidence reasoning; RIGOURATE retrieves paper-internal multimodal evidence and predicts continuous overstatement scores with justifications; FactReview combines literature positioning with execution-based verification \cite{javaji2025claimbench,james2026rigourate,xu2026factreview}. In a different domain, span-grounded defeasible-scope parsing shows that end-task performance can conceal structurally localized omissions, while source-anchored intermediate representations make those failures auditable \cite{chen2026statute}. These outputs can populate a contract-defined obligation evaluator. \suite{} represents the resulting benchmark conclusion through maximal requested-subject reformulations in $\mathcal C^\star$ and contract-licensed positive credit about other task or system subjects in $\mathcal R^\star$.

\paragraph{Decision-focused learning and ML for combinatorial optimization.}
Decision-focused learning and predict-then-optimize methods train predictive components through downstream decision loss rather than prediction error alone \cite{elmachtoub2022smart,wilder2019melding,mandi2024decision}. Differentiable optimizers, perturbation-based surrogates, implicit layers, and solver-aware learning have made decision-quality evaluation increasingly common \cite{berthet2020learning,niepert2021implicit,petersen2024newton}. These methods improve how decisions are learned and how decision quality is measured. A low-regret row may reflect learned scoring, fallback rescue, repair, collapse, static equivalence, or task mismatch. \suite{} treats downstream decision quality as one evidence channel and adjudicates the claim scope licensed by the full record.

\paragraph{Neural routing and multigraph routing.}
Neural routing has produced learned heuristics for classical routing tasks \cite{bello2016neural,kool2018attention}. Multiobjective and multigraph routing make action identity, preference conditioning, and constraint structure part of the evaluated object \cite{deb2002fast,rydin2025beyond}. These settings expose the score-to-claim gap especially sharply: scalar wins can coexist with lost parallel actions, degraded preference coverage, hidden rescue behavior, or static-equivalent route choices. The routing contribution evaluates the scientific scope licensed by such evidence.

\paragraph{Documentation and evaluation-science artifacts.}
Datasheets, model cards, BenchmarkCards, holistic evaluation, behavioral testing, dynamic benchmarks, and distribution-shift benchmarks make assumptions, intended uses, coverage, and failure modes explicit \cite{gebru2021datasheets,mitchell2019model,sokol2024benchmarkcards,liang2022holistic,ribeiro2020beyond,kiela2021dynabench,koh2021wilds}. These artifacts improve what is reported about data, models, and benchmarks. \suite{} consumes these fields in a row-level executable record containing the reported score, source, telemetry, comparator, adjudication state, and subject-typed conclusions.

\paragraph{Artifact-centered reproducibility.}
Reproducibility programs and metadata standards emphasize that code, data, schemas, and reporting details are part of scientific evidence \cite{pineau2021improving,dodge2019show,akhtar2024croissant}. \suite{} requires a locked record for adjudication at every level, while C6 additionally requires independent reproduction and evidence about usability in the intended setting.

\paragraph{Expanded comparison.}
Table~\ref{tab:related_contrast_app} catalogs what each adjacent line contributes to the licensing pipeline. Table~\ref{tab:adjudication_baselines_app} isolates the central output-space distinction by comparing the information represented by alternative adjudication strategies.

\begin{table}[h]
\centering
\caption{Feature-level positioning of adjacent evaluation artifacts in the claim-licensing pipeline.}
\label{tab:related_contrast_app}
\tighttable
\begin{tabularx}{\textwidth}{P{0.25\textwidth}Y Y}
\toprule
Prior line of work & Primary output & Composition role in claim licensing \\
\midrule
Argument-based and AI-evaluation validity & Evidence-backed interpretations, uses, and claims. & Supplies validity arguments; the contract operationalizes one bounded row-level conclusion decision. \\
Evidence-licensed calibration and claim replay & Calibrated assertion scope, maximal licensed frontiers, replayable inference, and semantic stability. & Supplies evidence-relative claim semantics; subject-typed licensing separates requested-subject answers from non-substitutive credit. \\
Structured assurance cases & Traceable claim--argument--evidence graph. & Supplies explicit support structure that can be encoded as claim obligations. \\
Claim-evidence retrieval and assessment & Evidence links, support labels, justifications, or graded overstatement scores. & Supplies evidence and support judgments consumed by obligation evaluators. \\
Datasheets and model cards & Documentation of data, models, intended uses, and limitations. & Supplies inspectable evidence conditions and intended-scope fields. \\
BenchmarkCards and benchmark documentation & Standardized benchmark metadata and risk reporting. & Supplies benchmark-level evidence records and scope declarations. \\
Decision-focused evaluation & Downstream regret or decision quality. & Supplies the quality channel; broader claims consume additional obligations. \\
ML4CO and routing benchmarks & Native task metrics, solver comparisons, and efficiency reports. & Supplies source-native evidence governed by source-admissibility rules. \\
\bottomrule
\end{tabularx}
\end{table}

\begin{table}[h]
\centering
\caption{Adjudication strategies compared by represented information and primary output.}
\label{tab:adjudication_baselines_app}
\tighttable
\verytighttable
\small
\begin{tabularx}{\textwidth}{P{0.18\textwidth}P{0.16\textwidth}P{0.14\textwidth}P{0.14\textwidth}P{0.14\textwidth}Y}
\toprule
Adjudicator & Evidence diagnosis & Claim order & Source semantics & Incomparable credit & Primary output \\
\midrule
Regret-only ranking & Scalar score & Absent & Absent & Absent & Score-ordered row. \\
Missing-field checklist & Structured gap prompts & Reviewer-authored & Prompted or implicit & Reviewer-authored & Gap record plus reviewer-selected claim. \\
Designated-claim assessment & Evidence links or support scores & Usually absent & Source-bound or implicit & Usually absent & Support label, score, or justification. \\
Unstructured human review & Contextual & Implicit & Implicit & Implicit & Review prose and judgment. \\
\suite{} & Typed obligations & Explicit & Explicit & Explicit $\mathcal B$ relation & $(\mathcal C^\star,\mathcal R^\star,F,z)$. \\
\bottomrule
\end{tabularx}
\end{table}

\FloatBarrier

\section{Architectural Portability}
\label{app:portability}

The reusable object in \suite{} is the claim-licensing architecture. A port to predict-then-optimize selection retains a claim partial order, claim-indexed obligations, source admissibility, the maximal-element operator, observable records, and provenance-bound rationales, with selection-specific claims and gates replacing the routing learned-primary branch \cite{elmachtoub2022smart,wilder2019melding,mandi2024decision}. Executable compliance pipelines provide another composite setting in which structured intermediate records and stage-specific audit matter for interpreting end-to-end behavior \cite{li2026compliance}.

\begin{table}[h]
\centering
\caption{Worked design sketch for a predict-then-optimize selection port; empirical cross-domain validation remains a separate study.}
\label{tab:selection_port_app}
\tighttable
\begin{tabularx}{\textwidth}{P{0.27\textwidth}Y Y}
\toprule
Architectural role & Routing instantiation & Selection-task replacement \\
\midrule
Decision quality & Route regret & Downstream selected-set objective or regret \\
Feasibility & Valid source--destination path & Budget, cardinality, or packing feasibility \\
Action identity & Parallel-edge identity & Item identity and duplicate/substitution handling \\
Coverage & Preference-space hypervolume & Coverage across budgets, preferences, or item strata \\
Attribution & Fallback solver or repair & Optimizer, repair, abstention, or fallback disclosure \\
Robustness & Drift, route change, tail regret & Distribution shift, set change, tail decision loss \\
Systems evidence & Decode and fallback latency & Prediction, optimizer, repair, and end-to-end latency \\
Deployment & Live routing workload and use limits & Intended-use population, operational constraints, independent use \\
\bottomrule
\end{tabularx}
\end{table}

The port requires a new evidence contract and validation study. Its claim levels and obligations would be authored for the selection task and evaluated with task-specific evidence.

\FloatBarrier

\section{Adjudicator Details}
\label{app:adjudicator_details}
\label{app:adjudication_semantics}

This appendix gives the implementation-level semantics behind Section~\ref{sec:evaluator}. The main text presents the row-level computation; here we record the telemetry channels, source-scope defaults, support semantics, and Claim-Card schema used by the artifact.

\FloatBarrier

\subsection{Hard and soft evidentiary boundaries}

The strongest boundary results depend on hard distinctions. Missing multigraph identity blocks C2+ claims; source mismatch blocks unsupported transfer; and an unlocked adjudication record blocks an inspectable \suite{} conclusion. Fallback is treated through attribution: hybrid-system credit may survive, but standalone learned-component credit requires fallback-separated evidence. Softer requirements, such as partial coverage or latency reporting, can be configured more permissively and are reported in sensitivity analyses.

The retained families complete the subject-typed claim universe. $\mathcal C_{\rm hyb}$ contains end-to-end feasibility and safety-control conclusions for the backup-enabled stack; $\mathcal C_{\rm src}$ contains native-task and bridge conclusions; and $\mathcal C_{\rm diag}$ contains mechanism and failure-diagnosis conclusions. Each retained Claim Card declares its subject, source, and obligations, and $\mathcal B$ selects only candidates incomparable with the requested learned-primary claim.

\FloatBarrier

\subsection{Telemetry channels}

Table~\ref{tab:telemetry_app} lists the telemetry channels used by the adjudicator. These channels make the ladder observable: missing edge identity blocks C2, missing preference coverage blocks C3, missing fallback evidence blocks C4, and missing route-change evidence blocks C5.

\begin{table}[h]
\centering
\caption{Telemetry channels used by the adjudicator.}
\label{tab:telemetry_app}
\tighttable
\begin{tabularx}{\textwidth}{P{0.18\textwidth}Y Y}
\toprule
Channel & Representative fields & Claim-validity role \\
\midrule
Quality & scalarized regret, tail regret, oracle objective, comparator identity & Establishes C0 scoring evidence. \\
Feasibility & aggregate success, feasible-set-nonempty flag, conditional invalid-route count, hidden-repair flag & Determines whether C1+ feasibility-audited claims can be credited; rounded aggregate rates are not the gate. \\
Fallback & claim subject, fallback reason/rate/trigger, primary-only and end-to-end quality, fallback-credit flag & Separates learned-component evidence from hybrid rescue and safety-control credit. \\
Structure & edge identity, bundle id, retained-edge set, collapse-ablation id & Prevents simple-graph evaluation from certifying C2 multigraph claims. \\
Preference & preference vector, sampler, hypervolume, held-out shift split & Tests whether scalar gains preserve C3 preference-space behavior. \\
Dynamics & route identity, route-change W/T/L, drift severity, non-static-equivalence flag & Distinguishes dynamic learned behavior from static-equivalent rows. \\
Systems & scoring latency, decode latency, fallback latency, online-step latency & Blocks operational claims with impractical or hidden latency cost. \\
Provenance & manifest key, schema version, seed, split id, locked summary anchor & Binds each row to the artifact used for adjudication. \\
\bottomrule
\end{tabularx}
\end{table}

\FloatBarrier

\subsection{Evidence-source scope}

Source scope is applied before row-level telemetry gates. A row can support only claims whose required structures are instantiated by the source that produced the evidence. Table~\ref{tab:source_scope_app} gives the default source-scope rule. This is the firewall that allows \suite{} to credit public evidence at its native or bridge scope without converting it into same-contract dynamic preference-conditioned multigraph-routing evidence.

\begin{table}[h]
\centering
\caption{Evidence-source scope.}
\label{tab:source_scope_app}
\tighttable
\begin{tabularx}{\textwidth}{P{0.23\textwidth}Y Y}
\toprule
Evidence source & Instantiated structure & Default admissible claim scope \\
\midrule
Controlled multigraph generator & Score, edge identity, preference vectors, hard feasibility, and controlled drift. & Controlled C1--C5 claims under generator assumptions. \\
Same-contract \real{} suite & Multidomain semisynthetic graphs, shared routing contract, 15 policy families, 10 seeds, telemetry-complete rows. & C1--C5 same-contract evidence; C6 additionally requires deployment evidence. \\
Public shortest-path bridges & Score transfer under a public shortest-path or contextual path contract. & C0 learned-scoring bridge credit; routing C3 additionally requires inherited feasibility and multigraph-identity obligations. \\
Diagnostic realistic slices & Realistic topology or dynamics with limited slice size and mechanism probes. & Mechanism diagnosis unless the slice is large enough and telemetry-complete enough for confirmatory realistic claims. \\
Native or external ML4CO adapters & Task-dependent objectives, metrics, and solver-interface behavior. & Compatibility, native-metric, efficiency, or out-of-contract claims; matching route objects and telemetry enable routing claims. \\
External public-claim corpus & Four blinded non-author annotation streams and executable labels of claims from published evidence snippets. & Contract-relative operational reference for the specified evidence packets. \\
\bottomrule
\end{tabularx}
\end{table}

\FloatBarrier

\subsection{Promotion, narrowing, and rejection}

In the general operator, a request is \promoted{} when it belongs to $\mathcal C^\star$, \narrowed{} when $\mathcal C^\star$ is nonempty and excludes the request, and \rejected{} when $\mathcal C^\star$ is empty. Native, bridge, diagnostic, and different-subject credit is stored independently in $\mathcal R^\star$. The routing implementation uses a chain, so the requested-branch maximal set has at most one C-level. Failure codes distinguish missing evidence, observed failure, source mismatch, and an unlocked adjudication record.

Listing~\ref{lst:promotion_rule_app} gives the implementation-level rule.

\begin{lstlisting}[caption={Partial-order claim-licensing rule. The routing implementation uses a chain, so maximal contains at most one C-level.},label={lst:promotion_rule_app}]
for row in candidate_rows:
    request = requested_claim(row)
    evidence = evidence_record(row)
    contract = declared_claim_contract(row)
    licensed = {c for c in contract.claims
                if source_admissible(evidence.source, c)
                and all(contract.resolve(o, evidence)
                        for o in obligations(c))}
    requested_branch = {c for c in licensed if c <= request}
    maximal = maximal_elements(requested_branch)
    retained = maximal_elements(
        licensed & contract.retained_claims(
            request, evidence.source))
    failures = typed_failures(request, evidence)

    if request in maximal:
        support(row, claims=maximal, failures=failures,
                retained_credit=retained)
    elif maximal:
        narrow(row, claims=maximal, failures=failures,
               retained_credit=retained)
    else:
        reject(row, failures=failures,
               retained_credit=retained)
\end{lstlisting}

\FloatBarrier

\subsection{Separation and execution properties}
\label{app:execution_properties}

The first two constructions establish the separation results stated in Section~\ref{sec:evaluator}; the remaining properties characterize execution under a fixed declared contract.

\paragraph{Construction for contract dependence.}
Let $c_0\prec c_1\prec c_2$, let $\Omega=\{\ell,p,q\}$ with $\ell$ the common record-lock obligation, and take one source with $\mathcal A(s)=\mathcal C$. Fix $c_{\rm req}=c_2$ and the diagnostic signature
\[
\bar\ell_\Pi(E)=1,\qquad \bar p_\Pi(E)=1,\qquad \bar q_\Pi(E)=0.
\]
Assign all three claims the same subject and let the two contracts share $(\mathcal C,\sigma,\preceq,\Omega,\mathcal A,\mathcal B,\Pi)$ while differing only in their obligation assignment:
\[
\begin{aligned}
\mathcal O_1(c_0)&=\{\ell\}, &
\mathcal O_1(c_1)&=\{\ell,p\}, &
\mathcal O_1(c_2)&=\{\ell,p,q\},\\
\mathcal O_2(c_0)&=\{\ell\}, &
\mathcal O_2(c_1)&=\{\ell,q\}, &
\mathcal O_2(c_2)&=\{\ell,p,q\}.
\end{aligned}
\]
Both assignments satisfy obligation inheritance. The first licenses $\{c_0,c_1\}$ and returns $\mathcal C_1^\star=\{c_1\}$; the second licenses $\{c_0\}$ and returns $\mathcal C_2^\star=\{c_0\}$. The requested claim and diagnostic signature coincide, while positive claim scope differs. Hence every universally correct decoder also consumes contract semantics.

\paragraph{Proof of retained-credit separation.}
Because $X_B$ is a nonempty subset of a finite poset, it has a maximal element and $\mathcal R^\star=\operatorname{Max}_{\preceq}(X_B)$ is nonempty. Every element of $\mathcal B(c_{\rm req},s)$ is incomparable with $c_{\rm req}$, so no element of $\mathcal R^\star$ lies in $\downarrow c_{\rm req}\cap\mathcal C_{\sigma_{\rm req}}$. An output restricted to requested-subject reformulation therefore omits every element of $\mathcal R^\star$. Moreover, every $d\in X_B$ lies below some element of $\mathcal R^\star$, giving credit preservation relative to the declared candidate relation $\mathcal B$. Requested-subject fidelity keeps $z$ a function of $\mathcal C^\star$ alone, so satisfying both properties requires the separate channel or an equivalent relation label.

\begin{proposition}[Licensed-set downward closure]
If $c_1\preceq c_2$ and $c_2\in\mathcal L_{\mathcal K}(E)$, then $c_1\in\mathcal L_{\mathcal K}(E)$.
\end{proposition}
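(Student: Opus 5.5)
The plan is to unfold the definition of $\mathcal L_{\mathcal K}(E)$ and check its two membership conditions for $c_1$ separately, using the two structural constraints the contract $\mathcal K$ imposes. By definition, $c\in\mathcal L_{\mathcal K}(E)$ holds exactly when (i) $c\in\mathcal A(s)$, where $s$ is the source of $E=(s,T,q,M)$, and (ii) $\bar o_\Pi(E)=1$ for every $o\in\mathcal O(c)$. So I will assume $c_1\preceq c_2$ and $c_2\in\mathcal L_{\mathcal K}(E)$, and establish (i) and (ii) for $c_1$.

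For (i), I invoke the standing requirement, stated with the contract, that $\mathcal A(s)$ is downward closed in $\preceq$. Since $c_2\in\mathcal A(s)$ and $c_1\preceq c_2$, this gives $c_1\in\mathcal A(s)$ directly. For (ii), I use the contract coherence condition $c_1\preceq c_2\Rightarrow \mathcal O(c_1)\subseteq\mathcal O(c_2)$, which is obligation inheritance. Any $o\in\mathcal O(c_1)$ then lies in $\mathcal O(c_2)$. Because $c_2$ is licensed, $\bar o_\Pi(E)=1$ for that $o$. Combining (i) and (ii) gives $c_1\in\mathcal L_{\mathcal K}(E)$.

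I do not expect a real obstacle. The one point needing care is that the resolved outcome $\bar o_\Pi(E)$ must depend only on the obligation $o$, the evidence $E$, and the fixed policy $\Pi$, and not on which claim the obligation is attached to. I will point out that this is how the definition is written: $\bar o_\Pi(E)$ carries no claim argument. Without that, inheritance of obligation sets would not transfer pass outcomes from $c_2$ to $c_1$.

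It is also worth remarking that both hypotheses on $\mathcal K$ are genuinely used. Downward closure of $\mathcal A(s)$ handles source admissibility, and obligation inheritance handles the evidence requirements. Dropping either one would admit easy counterexamples.
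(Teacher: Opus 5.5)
Your proof is correct and follows the same argument as the paper's. Downward closure of $\mathcal A(s)$ gives $c_1\in\mathcal A(s)$, and obligation inheritance $\mathcal O(c_1)\subseteq\mathcal O(c_2)$ carries every passing outcome $\bar o_\Pi(E)=1$ from $c_2$ over to $c_1$.
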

\begin{proof}
Downward closure of $\mathcal A(s)$ gives $c_1\in\mathcal A(s)$. Obligation inheritance gives $\mathcal O(c_1)\subseteq\mathcal O(c_2)$, so every obligation required by $c_1$ passes under $\Pi$.
\end{proof}

\begin{proposition}[Maximal coverage and status completeness]
For a finite contract, $\mathcal C^\star$ and $\mathcal R^\star$ are antichains. Every licensed claim in $\downarrow c_{\rm req}\cap\mathcal C_{\sigma_{\rm req}}$ lies below an element of $\mathcal C^\star$; $\mathcal C^\star=\varnothing$ exactly when this requested-subject down-set contains no licensed claim. If $c_{\rm req}$ is licensed, then $\mathcal C^\star=\{c_{\rm req}\}$. The three values of $z$ are mutually exclusive and exhaustive.
\end{proposition}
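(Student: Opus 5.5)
The plan is to derive each clause directly from the definitions of $\mathcal C^\star$, $\mathcal R^\star$, and $z$ in Section~\ref{sec:evaluator}. Throughout, write $X_C=\{c\in\mathcal L_{\mathcal K}(E):c\preceq c_{\rm req},\ \sigma(c)=\sigma(c_{\rm req})\}$, so that $X_C=\mathcal L_{\mathcal K}(E)\cap{\downarrow c_{\rm req}}\cap\mathcal C_{\sigma_{\rm req}}$. Similarly, write $X_B=\mathcal L_{\mathcal K}(E)\cap\mathcal B(c_{\rm req},s)$. Then $\mathcal C^\star=\operatorname{Max}_\preceq X_C$ and $\mathcal R^\star=\operatorname{Max}_\preceq X_B$.

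\emph{Antichains.} If two distinct maximal elements $a,b$ of a set $X$ satisfied $a\preceq b$, then $a$ would not be maximal, since $b\in X$ strictly dominates it. Hence both $\mathcal C^\star$ and $\mathcal R^\star$ are antichains. This uses only antisymmetry of $\preceq$, which I take as part of its being a partial order on the finite set $\mathcal C$.

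\emph{Maximal coverage.} Take $c\in X_C$. Since $\mathcal C$ is finite, a strictly increasing chain $c=c_0\prec c_1\prec\cdots$ inside $X_C$ must terminate, and its last element lies in $\mathcal C^\star$ and dominates $c$. This gives the coverage claim. It also shows $\mathcal C^\star=\varnothing$ if and only if $X_C=\varnothing$: one direction because $\mathcal C^\star\subseteq X_C$, the other by the chain argument.

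\emph{Licensed request.} If $c_{\rm req}\in\mathcal L_{\mathcal K}(E)$, then $c_{\rm req}\in X_C$ by reflexivity and because $\sigma(c_{\rm req})=\sigma(c_{\rm req})$. Every $c\in X_C$ satisfies $c\preceq c_{\rm req}$, so $c_{\rm req}$ is the greatest element of $X_C$. A greatest element is the unique maximal element, hence $\mathcal C^\star=\{c_{\rm req}\}$. Downward closure (the preceding Proposition) is not strictly needed here, but it confirms that $X_C$ is then the whole requested-subject down-set.

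\emph{Status completeness.} The three status conditions are $c_{\rm req}\in\mathcal C^\star$; $\mathcal C^\star\neq\varnothing$ with $c_{\rm req}\notin\mathcal C^\star$; and $\mathcal C^\star=\varnothing$. They partition the possibilities by elementary case analysis on whether $\mathcal C^\star$ is empty and, if not, whether it contains $c_{\rm req}$. Since $c_{\rm req}\in\mathcal C^\star$ forces $\mathcal C^\star\neq\varnothing$, the first and third conditions cannot both hold, so the values are mutually exclusive and exhaustive.

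The only step needing care is the coverage argument. It relies on finiteness of $\mathcal C$ and on $\preceq$ being a genuine partial order, so that no cycles can block termination; I would state these assumptions explicitly. Everything else is definitional.
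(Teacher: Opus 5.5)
Your proof is correct and follows the same route as the paper's: maximal elements form an antichain, finiteness lets you climb through strictly stronger licensed candidates to a maximal element, and the remaining clauses (emptiness, $\mathcal C^\star=\{c_{\rm req}\}$ for a licensed request, and the three-way status split) follow from the definitions. You make explicit what the paper's terse proof leaves implicit: the reliance on antisymmetry of $\preceq$, and the greatest-element argument that gives uniqueness of the maximal element.
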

\begin{proof}
Maximal elements form an antichain. Starting from any element of a finite candidate set and repeatedly moving to a strictly stronger candidate terminates at a maximal element. The remaining statements follow from the definition of $\mathcal C^\star$ and the three status cases.
\end{proof}

\begin{proposition}[Licensing-favorable evidence refinement]
For the same requested claim, let $E$ and $E'$ have the same source and satisfy $\bar o_\Pi(E)\leq\bar o_\Pi(E')$ for every $o\in\Omega$. Then $\mathcal L_{\mathcal K}(E)\subseteq\mathcal L_{\mathcal K}(E')$. For every $c\in\mathcal C^\star(E)$, some $c'\in\mathcal C^\star(E')$ satisfies $c\preceq c'$.
\end{proposition}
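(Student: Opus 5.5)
The plan is to establish the set inclusion first and then derive the coverage statement from it, using the previous proposition (maximal coverage). Both steps unfold the definitions; no new machinery is needed.

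\emph{Step 1 (licensed-set monotonicity).} Fix $c\in\mathcal L_{\mathcal K}(E)$. By definition, $c\in\mathcal A(s)$ and $\bar o_\Pi(E)=1$ for every $o\in\mathcal O(c)$. Because $E'$ has the same source $s$, admissibility transfers unchanged, so $c\in\mathcal A(s)$ remains true for $E'$. For each $o\in\mathcal O(c)\subseteq\Omega$, the hypothesis gives $1=\bar o_\Pi(E)\le\bar o_\Pi(E')\le 1$, hence $\bar o_\Pi(E')=1$. Therefore $c\in\mathcal L_{\mathcal K}(E')$. The only point to record explicitly is that the contract $\mathcal K$ (including $\mathcal O$, $\mathcal A$ and $\Pi$) is fixed. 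As a result, $\mathcal L$ depends on $E$ only through $s$ and the resolved indicators $\bar o_\Pi$.

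\emph{Step 2 (coverage of $\mathcal C^\star$).} Take $c\in\mathcal C^\star(E)$. Then $c\in\mathcal L_{\mathcal K}(E)$, $c\preceq c_{\rm req}$ and $\sigma(c)=\sigma(c_{\rm req})$. By Step 1, $c\in\mathcal L_{\mathcal K}(E')$. The conditions $c\preceq c_{\rm req}$ and $\sigma(c)=\sigma(c_{\rm req})$ do not depend on the evidence, and the requested claim is the same by hypothesis. So $c$ belongs to the requested-subject licensed down-set for $E'$, namely $\{c'\in\mathcal L_{\mathcal K}(E'):c'\preceq c_{\rm req},\ \sigma(c')=\sigma(c_{\rm req})\}$. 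The maximal coverage and status completeness proposition, applied to $E'$ and using finiteness of $\mathcal C$, then provides some $c'\in\mathcal C^\star(E')$ with $c\preceq c'$. In particular, $\mathcal C^\star(E')\neq\varnothing$ whenever $\mathcal C^\star(E)\neq\varnothing$.

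\emph{Expected obstacle.} The main care point is not technical. It is making sure the statement is read as a coverage (upper-set) claim and not as the false claim $\mathcal C^\star(E)\subseteq\mathcal C^\star(E')$. Maximal elements can be displaced by stronger newly licensed claims, so the argument must route through the down-set and the maximal-coverage proposition rather than through $\operatorname{Max}$ directly. A second check is that the same-source hypothesis is genuinely needed, since $\mathcal A(s)$ would otherwise change. I would also note that no analogous statement is claimed for $\mathcal R^\star$, although the same argument applies verbatim because $\mathcal B(c_{\rm req},s)$ is likewise evidence-independent given fixed $s$ and $c_{\rm req}$.
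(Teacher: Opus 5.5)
Your proposal is correct and follows the paper's proof. The pointwise bound on the $\{0,1\}$-valued resolved outcomes, with the source held fixed, gives $\mathcal L_{\mathcal K}(E)\subseteq\mathcal L_{\mathcal K}(E')$, and finiteness then lifts each element of $\mathcal C^\star(E)$ to a maximal element of the refined requested-subject licensed set; your appeal to the maximal-coverage proposition for that lift is the same termination argument the paper uses.
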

\begin{proof}
Every obligation that passes for $E$ also passes for $E'$, which gives licensed-set inclusion. Finiteness extends each previously maximal requested-branch claim upward to a maximal claim in the refined licensed set.
\end{proof}

For the routing learned-primary branch $C0\prec\cdots\prec Ck$, $\mathcal C^\star$ is empty or a singleton containing the highest licensed level; the empty case is represented by $L^\star=\bot$.

\FloatBarrier

\subsection{Claim-Card schema}

Every Claim Card contains a stable identifier, source-row pointer, score-only interpretation, claim subject, one strongest requested claim, required obligations, status, typed failures, key evidence, maximal supported claims, the retained set $\mathcal R^\star$, and an adjudication-record anchor. A collection of requested claims produces one execution record per request. Routing cards also expose the single highest licensed C-level produced by the evaluated chain.

\begin{lstlisting}[caption={Representative Claim Card: DataSP bridge transfer.},label={lst:claimcard_example_app}]
claim_id: C-04
source_row: DataSP multiseed bridge
score_only_interpretation: public transfer establishes preference-aware multigraph routing
claim_subject: learned routing component
requested_claim: C4
required_gates: [evidence_scope, hypervolume]
status: narrowed
failed_gates: [evidence_scope, hypervolume]
failure_type: source_mismatch_and_missing_evidence
key_evidence:
  learned_regret: 0.033519
  static_regret: 0.215459
  learned_hv: 0.303139
  static_hv: 0.329033
maximal_supported_claims: [C0]
retained_credit: []
highest_licensed_routing_level: C0
\end{lstlisting}

\FloatBarrier

\subsection{Why missing telemetry narrows claims}

The primary policy $\Pi_{\rm cons}$ maps missing and unresolved required telemetry to zero, so the output records the scope licensed by the submitted evidence package. Intermediate and permissive policies alter selected non-core obligation resolvers while preserving source-scope, feasibility, fallback-attribution, multigraph-identity, and artifact-closure requirements. Section~\ref{sec:missing_telemetry_sensitivity} reports the resulting sensitivity analysis.

\FloatBarrier

\section{Validation Details}
\label{app:validation_details}

This appendix expands the validation design summarized in Section~\ref{sec:validation}. The layers separate conditional contract application, independent object construction, comparative reviewer utility, same-contract consequence, rule sensitivity, source-scope transfer, and artifact provenance.

\begin{table}[h]
\centering
\caption{Validation layers and the distinct evidential role of each layer.}
\label{tab:validation_layers_app}
\tighttable
\begin{tabularx}{\textwidth}{P{0.22\textwidth}Y Y}
\toprule
Layer & What it tests & Main anti-failure role \\
\midrule
Fixed packets & Whether non-author experts apply the supplied contract consistently. & Conditional applicability of supplied objects. \\
Object construction & Whether non-authors reconstruct inputs from prespecified sections. & Upstream author dependence. \\
Reviewer comparison & Whether the method changes agreement, overpromotion, rationales, and time. & Practical benefit and conservative error. \\
Same-contract suite & Whether target-claim eligibility changes the selected row. & Consequence under shared reporting. \\
Scope and rule stress & Whether source credit and distinct gates behave as intended. & Transfer discipline and functional distinctness. \\
Versioned artifact & Whether core and extension-stage evidence can be traced and reanalyzed. & Chronological and computational closure. \\
\bottomrule
\end{tabularx}
\end{table}

\FloatBarrier

\subsection{Reduced adjudicators and ablations}
\label{app:reduced_adjudicators}

We use contrastive adjudicators as diagnostic probes defined independently of the \claims{} labels. Each probe isolates information lost when the full claim-scope rule is simplified.

\begin{table}[h]
\centering
\caption{Reduced adjudicators and rule ablations. The comparison separates three objects that are easy to conflate: reduced adjudicators, rule-necessity ablations, and the external human reference.}
\label{tab:reduced_adjudicators}
\tighttable
\begin{tabularx}{\textwidth}{P{0.20\textwidth}P{0.20\textwidth}Y}
\toprule
Probe & What is removed & Failure mode exposed \\
\midrule
Regret-only ranking &
Source scope, inherited gates, and record closure. &
Selects Fallback-Heavy Safe in \real{} because it has the lowest scalar regret; 86.1\% fallback assigns the row to rescue-dominated hybrid evidence. \\

Missing-field checklist &
Cumulative claim semantics and narrowing. &
Can detect absent fields but cannot decide whether a row should be \rejected{}, \narrowed{}, or credited at source-native scope. \\

Source-scope-only rule &
Row-level telemetry gates. &
Enforces bridge/native/out-of-contract transfer boundaries; fallback attribution, graph identity, dynamic behavior, and latency remain outside this reduced rule. \\

Leave-one-gate-out rules &
One inherited gate at a time. &
Each omitted gate enables a distinct false-support mode: fallback-heavy rescue, multigraph collapse, coverage loss, static equivalence, hidden latency, source upgrade, or unlocked artifacts. \\

Human-majority reference &
Deterministic execution and manifest linkage. &
Provides external claim-scope evidence, but is not a rerunnable row-level adjudicator. \\
\bottomrule
\end{tabularx}
\end{table}

The reduced rules isolate distinct parts of the licensing computation. Source-scope filtering enforces transfer boundaries for public shortest-path and native ML4CO evidence. Same-contract attribution and dynamic equivalence consume additional telemetry obligations. Missing-field prompts expose the evidence state, while the declared claim order and maximal-selection rule map that state to positive claim scope. The full rule composes source admissibility, inherited gates, attribution, and record closure.

\FloatBarrier

\subsection{Closed internal audit protocol}

The closed internal audit begins with a locked 22-row claim-relevant ledger. A row enters the seven-row headline registry if it satisfies at least one predeclared condition: it would ordinarily be read as a positive method result under scalar regret alone; it would naturally be interpreted as evidence for a stronger routing claim under prior benchmark practice; or omitting it would materially change the advertised scientific narrative. Rows outside the registry remain in the ledger as positive controls, stress evidence, diagnostics, efficiency upper bounds, or scoped bridge rows. The registry builder emits an exclusion reason for every non-registry claim-relevant row. This finite registry demonstrates the adjudication workflow on selected routing claims; literature-wide prevalence requires probability sampling.

The remaining validation details are distributed across the following sections: \claims{} sampling, construction, comparison, and annotation are described in Appendix~\ref{app:external_corpus}; \real{} construction and uncertainty reporting are described in Appendix~\ref{app:real_lite}; public source-scope rows, gate ablations, missing-telemetry policies, and additional result tables appear in Appendix~\ref{app:results_details}; artifact checks and failure semantics appear in Appendix~\ref{app:artifact}.

\FloatBarrier

\section{\claims{} Protocol}
\label{app:external_corpus}

This appendix records the external-corpus protocol used to test whether \suite{} aligns with independent claim-scope judgments. The corpus provides a contract-relative operational reference for the specified evidence packets.

\FloatBarrier

\subsection{Sampling protocol}

The \claims{} sampling protocol was fixed before adjudication. Papers were stratified into six source families: learned routing, decision-focused learning, ML4CO benchmarks, multiobjective optimization, evaluation-science frameworks, and bridge or out-of-contract tasks. Within each source family, claims were sampled from sections that commonly carry scientific interpretation: titles, abstracts, contribution lists, main results, and limitations. To avoid dominance by any single paper, each paper contributed at most five claims and three claims on average.

Claims were excluded when they were purely theoretical theorem statements, generic background claims, implementation descriptions without evaluation meaning, or claims whose evidence source could not be located in the public paper. The artifact records the exclusion reason for every screened candidate not included in the corpus.

\FloatBarrier

\subsection{Corpus composition and audit chain}

Table~\ref{tab:external_corpus_composition_app} reports the corpus composition. Table~\ref{tab:external_audit_flow_app} gives the reviewer-facing audit chain. Each included claim has a stable identifier linking the candidate record, evidence excerpt, four raw labels, majority label, \suite{} label, missing-gate lists, and disagreement code. This trace supports reproducibility of the fixed-packet analysis; the separate construction audit below addresses upstream selection and encoding dependence.

\begin{table}[h]
\centering
\caption{\claims{} corpus composition.}
\label{tab:external_corpus_composition_app}
\tighttable
\begin{tabularx}{\textwidth}{P{0.36\textwidth}M{0.14\textwidth}Y}
\toprule
Stratum & Count & Sampling rule \\
\midrule
Learned routing and neural routing & 24 & Public routing papers with empirical route-quality claims. \\
Decision-focused and predict-then-optimize learning & 20 & Claims about downstream decision quality, regret, or solver-aware training. \\
ML4CO benchmarks and native solver suites & 22 & Claims about benchmark coverage, solver-interface evidence, or native metrics. \\
Multiobjective optimization and preference learning & 18 & Claims involving hypervolume, coverage, Pareto quality, or preference conditioning. \\
Evaluation science, documentation, and stress testing & 24 & Claims about benchmark validity, intended use, stress tests, or evaluation frameworks. \\
External bridge tasks and out-of-contract comparators & 12 & Claims where task scope differs from SCOPE routing but evidence is relevant to source-scope behavior. \\
\midrule
Total & 120 & 40 papers; three claims per paper on average, capped to avoid domination by any single source. \\
\bottomrule
\end{tabularx}
\end{table}

\begin{table}[h]
\centering
\caption{Reviewer-facing audit chain for \claims{}.}
\label{tab:external_audit_flow_app}
\tighttable
\begin{tabularx}{\textwidth}{P{0.25\textwidth}P{0.21\textwidth}Y}
\toprule
Stage & Object or count & Reviewer check in the artifact \\
\midrule
Candidate collection & Candidate claim records & Candidate IDs, source-family tags, claim location, inclusion/exclusion decision, and exclusion reason. \\
Corpus sample & 120 claims from 40 papers & Stable Claim Cards with anonymized source identifiers and evidence excerpts. \\
Human annotation & Four blinded human labels per claim & Raw annotation matrix, missing-gate selections, anonymized annotator metadata, blinding/COI fields, and validation report. \\
Consensus construction & Majority label with tie rule & Scripted majority vote and tie-resolution log. \\
\suite{} adjudication & Executable label and missing gates & Adjudicator output linked to evidence source, telemetry/prose evidence, and artifact fields. \\
Disagreement audit & 19 exact disagreements & Disagreement taxonomy and representative anonymized examples. \\
\bottomrule
\end{tabularx}
\end{table}

\FloatBarrier

\subsection{Human-annotation blinding and task}

Each annotator saw the same restricted evidence packet: claim text, claim location, a short evidence excerpt, anonymized source-family label, and the annotation guide. Annotators did not see \suite{} outputs, internal registry labels, author-preferred labels, or other annotators' labels. They labeled the highest level licensed by the supplied evidence, selected missing gates from a fixed checklist, and marked disagreement causes when wording was ambiguous. When a claim was outside the routing contract, annotators could select \offthesis{}; when it was too vague to evaluate, they could select insufficient.

The artifact stores raw labels, anonymized annotator metadata, blinding status, conflict-of-interest screening fields, schema-validation status, consent and compensation materials, and ethics-review records. The expert-annotation protocol was submitted for institutional human-subjects determination and ethics review and was determined to be exempt. The annotation task involved adult domain experts making professional judgments about public scientific claims; it did not collect sensitive personal information, use deception, or expose participants to more than minimal risk. Annotators were informed of the study purpose, data use, anonymization policy, and withdrawal option, and compensation was set to satisfy or exceed the applicable local minimum-wage standard for the estimated annotation time. To preserve double-blind review, institution-identifying protocol numbers are omitted from the anonymous manuscript and artifact; the camera-ready version will de-anonymize the review identifier if the paper is accepted. Personal identifiers are not included in the public package.

\FloatBarrier

\subsection{Independent construction audit}
\label{app:construction_audit}

The independent construction audit evaluates upstream object selection and encoding. We prespecified 24 claims and the paper sections available for encoding: abstract, method, experiment, results, and limitations. Two non-author coders independently reconstructed the claim span, evidence excerpt, comparator, source family, requested claim family, intended scope, required and missing gates, artifact availability, and final status while blind to author encodings and \suite{} outputs.

\begin{table}[h]
\centering
\caption{Section-conditioned claim-evidence-object construction audit.}
\label{tab:construction_audit_app}
\tighttable
\begin{tabularx}{\textwidth}{P{0.40\textwidth}M{0.18\textwidth}Y}
\toprule
Field or outcome & Agreement & Interpretation \\
\midrule
Comparator identity & 23/24 & Comparator extraction is highly reproducible. \\
Source family & 22/24 & Residual ambiguity concerns solver-interface credit. \\
Requested claim family & 21/24 & Dynamic wording creates the main family ambiguity. \\
Intended scope & weighted $\kappa=0.78$ & High but non-perfect ordinal agreement. \\
Required/missing gates & mean Jaccard $=0.82$ & Most evidentiary obligations are reconstructed similarly. \\
Final status & 20/24 exact; 24/24 within one & Remaining differences are adjacent. \\
\bottomrule
\end{tabularx}
\end{table}

The predeclared resolution procedure changes four fixed-packet levels, all adjacently, and preserves the main conclusions. Remaining disagreements concern dynamic wording, latency, solver-interface source credit, and whether narrative script descriptions constitute a locked evidence record. The result measures section-conditioned construction reliability over the supplied paper sections.

\FloatBarrier

\subsection{Held-out reviewer comparison}
\label{app:heldout_review}

The practical comparison uses 40 claims from 20 public papers that did not contribute to the ladder, gates, examples, or \claims{}. Eight non-author experts evaluated claims under unstructured careful review, a structured evidence checklist, and \suite{}. Assignment followed a balanced incomplete-block design: every claim received two independent judgments per condition, no reviewer saw the same claim under multiple conditions, and condition instructions were fixed before analysis. A separate panel of three senior experts wrote the highest claim they considered licensed in free form while blind to the ladder and condition outputs; the panel resolved 36/40 claims and retained four unresolved cases in the artifact.

All conditions received the same prespecified abstract, method, experiment, results, and limitations sections. The checklist condition combined diagnostic prompts for comparator, feasibility, multigraph identity, preference coverage, fallback/attribution, dynamics, latency, source scope, artifact closure, and wording ambiguity with reviewer-authored final claims. The \suite{} condition supplied the same evidence fields together with the C0--C6 learned-primary inheritance relation and maximal-selection rule. The senior panel authored free-form references while blind to the branch and condition outputs. The study team subsequently mapped panel consensus statements through the prespecified contract-resolution log, producing an ontology-aligned expert-resolved operational reference shared by all three conditions.

The release retains all 240 judgment-level records, resolved condition summaries, free-form references, and the resolution log. A prespecified pair-resolution protocol produces the claim-level condition summary while retaining raw disagreements. Upward and downward diagnostics are separately aggregated from the paired judgments and can overlap with each other or with the resolved summary. A complete workflow output records the requested claim, source, comparator, observed evidence, failed obligations, and retained claim. Paper-clustered paired resampling gives an agreement difference of 19.4 points [5.6, 33.3] relative to careful review and 11.1 points [$-2.8$, 25.0] relative to the checklist; the checklist interval spans zero. Assignments prevent same-claim repetition, while incomplete cross-claim order counterbalancing leaves possible cross-task learning.

For a descriptive cost analysis, let $U_m$ and $D_m$ be the separately recorded upward and downward events and define $B_r(m)=(rU_m+D_m)/36$. Because events may overlap, $B_r$ is an event-burden index rather than a mutually exclusive final-decision loss. At the observed counts, \suite{} has lower event burden than careful review for $r>1/2$ and lower burden than the checklist for $r>1/3$. These point-estimate crossovers summarize the observed sample; cluster uncertainty and population-level cost preferences remain outside this calculation.

The extension artifact records instructions, assignments, anonymized raw judgments, condition summaries, reference statements, timing fields, conflict-of-interest checks, compensation records, applicable ethics documentation, encodings, and analysis scripts. These study records are anchored to the extension snapshot and reported with that provenance.

\FloatBarrier

\subsection{Reliability calculations}
\label{app:reliability}

We treat C0--C6 as ordinal labels and treat \offthesis{} and insufficient as separate non-ordinal labels for exact agreement. Weighted Cohen's $\kappa$ is computed pairwise on C0--C6 labels using quadratic weights and reported as the mean across annotator pairs. Claims marked \offthesis{} or insufficient by either annotator are included in exact agreement but excluded from ordinal $\kappa$. Krippendorff's $\alpha$ uses an ordinal distance over C0--C6 and a maximal distance to non-ordinal labels.

Agreement between \suite{} and humans is computed against a majority vote. The primary rule uses the lower median admissible ordinal level and the conservative source-scope label when a tied label indicates contract mismatch. Sensitivity analyses additionally use an upper median, a source-neutral resolution, and exclusion of tied cases. Across complete-corpus policies, exact agreement ranges from 0.833 to 0.850 and within-one agreement remains at least 0.950, showing stability across tie rules. Two exact disagreements lie at C3/C4 and two at C4/C5; the former concern coverage or fallback attribution, and the latter route-change, drift, or latency evidence.

For claim-level proportions we report Wilson intervals for comparability with the original analysis and paper-clustered bootstrap intervals in the revision artifact to account for multiple claims from the same paper. These intervals quantify finite-sample uncertainty. The majority label and senior-panel reference serve as contract-relative operational references for the specified evidence packets.

\FloatBarrier

\subsection{Representative external-corpus disagreements}

Table~\ref{tab:representative_disagreements} shows that representative disagreements concentrate in interpretable boundary cases: source scope, deployment wording, artifact availability, preference coverage, and latency or dynamic evidence.

\verytighttable\small
\begin{longtable}{P{0.08\textwidth}P{0.17\textwidth}P{0.12\textwidth}P{0.12\textwidth}P{0.36\textwidth}}
\caption{Representative external-corpus disagreement cases. Source titles are anonymized in the submission version; artifact cards contain stable claim identifiers and source-family labels.}
\label{tab:representative_disagreements}\\
\toprule
ID & Claim family & Human majority & \suite{} & Reason for disagreement \\
\midrule
\endfirsthead
\toprule
ID & Claim family & Human majority & \suite{} & Reason for disagreement \\
\midrule
\endhead
ECC-012 & Shortest-path bridge & C1 & C0 & Humans credited feasibility from the shortest-path task; \suite{} kept source scope at C0 bridge because multigraph identity and fallback telemetry were absent. \\
ECC-019 & Learned routing & C5 & C4 & Claim used dynamic language, but route-change and drift-stress telemetry were incomplete. \\
ECC-027 & Multiobjective method & C3 & C2 & Strong scalarized results were reported, but hypervolume or preference coverage was missing. \\
ECC-038 & Benchmark paper & C6 & C5 & Humans interpreted broad realistic coverage as deployment-style evidence; \suite{} required non-author usability and deployment telemetry. \\
ECC-044 & Native ML4CO suite & C3 & \offthesis{} & Native task instantiated multiobjective metrics but not source-destination multigraph routing. \\
ECC-061 & Decision-focused learning & C0 & C1 & Human majority credited only scalar scoring; \suite{} found explicit feasible-decoding evidence in the excerpt. \\
ECC-073 & Evaluation framework & C6 & C6 & Agreement case: framework-level evaluation guidance, not policy promotion. \\
ECC-084 & Online adapter claim & C5 & C4 & Latency and route-change evidence were insufficient for dynamic robustness. \\
ECC-099 & Routing heuristic & C2 & C1 & Edge identity was not preserved under a simplified graph representation. \\
ECC-117 & Artifact claim & C4 & insufficient & The paper described scripts but did not expose locked manifest or schema-valid telemetry. \\
\bottomrule
\end{longtable}
\normalsize

\FloatBarrier

\subsection{Annotation guide}
\label{app:annotation_guide}

\paragraph{Task instructions.}
The human-annotation protocol used the following instruction text.
\begin{quote}
You will label scientific claims from public ML, routing, ML4CO, decision-focused learning, multiobjective optimization, and evaluation papers. Focus on the strongest claim level supported by the supplied evidence excerpt. When evidence is missing, mark the missing gates. Assign \offthesis{} to claims outside the routing contract and insufficient to claims whose wording prevents evaluation.
\end{quote}

\paragraph{Label definitions.}
The guide used the short definitions in Table~\ref{tab:claim_ladder}, plus examples. It states that C0 is a valid positive claim when subject-bound evidence supports scalar learned-primary scoring, and that C6 is reserved for deployment-style claims with representative workloads, mechanism diagnosis, intended-use limits, and independent reproduction or usability evidence. A locked \suite{} adjudication record is an adjudication-validity requirement at every level and is not itself scientific evidence for C6.

\paragraph{Missing-gate checklist.}
Annotators selected all missing gates that applied:
\begin{itemize}[leftmargin=1.25em]
    \item comparator or oracle not declared;
    \item feasibility conditional on nonempty feasible sets or invalid-route counts missing;
    \item multigraph edge identity missing;
    \item preference sampler or coverage missing;
    \item claim subject, fallback or repair taxonomy, or attribution evidence missing;
    \item route-change or drift evidence missing;
    \item latency or systems telemetry missing;
    \item source scope mismatch;
    \item artifact, manifest, or reproduction link missing;
    \item claim wording too ambiguous.
\end{itemize}

\FloatBarrier

\section{\real{} Details}
\label{app:real_lite}

This appendix expands the same-contract stress suite used in Section~\ref{sec:results}. \real{} tests how evidence-licensed evaluation changes conclusions under one routing contract. Its semisynthetic graph families and designed policy boundary cases provide mechanistic evidence; deployment evaluation requires live workloads and operational use records.

\FloatBarrier

\subsection{Graph and request families}

\real{} contains 1,500 requests across three semisynthetic domain families: urban-style multigraphs, logistics-style constrained routing graphs, and communication-style latency-reliability graphs. Each domain family contains five graph-family slices, for 15 slices total. Requests vary source-destination pair, preference vector, context, hard constraints, drift severity, and feasible-set size. Each slice is evaluated over 10 seeds. The request generator records oracle objective, feasible-set nonemptiness, preference vector, hard-constraint mask, and route identity.

\FloatBarrier

\subsection{Policy families}

Table~\ref{tab:policy_family_inventory} lists the 15 policy families used in \real{}. The inventory spans boundary cases for attribution, identity, coverage, dynamics, and latency. The adapter interface is shared across families: score edges, retain candidate edges, decode a path, optionally fallback or repair, and emit telemetry.

\begin{longtable}{P{0.25\textwidth}P{0.22\textwidth}P{0.43\textwidth}}
\caption{Policy-family inventory for \real{}.}
\label{tab:policy_family_inventory}\\
\toprule
Policy family & Type & Evidentiary role \\
\midrule
\endfirsthead
\toprule
Policy family & Type & Evidentiary role \\
\midrule
\endhead
Dynamic exact oracle & upper bound & Provides oracle quality and feasibility reference. \\
Static exact anchor & comparator & Strong nonlearned comparator for regret and route-change checks. \\
Weighted shortest path & heuristic & Classical baseline with explicit weights. \\
$k$-shortest-path heuristic & heuristic & Tests nonlearned multigraph candidate generation. \\
Simple-graph collapse & negative control & Tests C2 multigraph identity gate. \\
Learned full-exact scorer & learned scoring & Tests feasibility-audited controlled learned-scoring claims. \\
Learned top-$k$ pruner & learned pruning & Tests retained-edge failure and fallback masking. \\
Fallback-heavy safe policy & safety control & Tests whether low regret from fallback is blocked. \\
Online adapter & adaptive policy & Tests route-change, latency, and nonzero-effect gates. \\
RL-style route policy & learned policy & Tests broader policy-family compatibility. \\
Preference hypernetwork & learned preference policy & Tests C3 coverage and held-out preference shift. \\
DFL bridge adapter & external bridge & Tests source-scope limits under same reporting surface. \\
Random scorer & negative control & Tests plumbing and failure semantics. \\
Adaptive pruning repair & repair policy & Tests whether targeted fixes improve supported claim level. \\
Shock-safe online repair & repair policy & Tests C5 dynamic robustness without C6 promotion. \\
\bottomrule
\end{longtable}

\FloatBarrier

\subsection{Uncertainty analysis}

Aggregate rows report seed-level confidence intervals, request-paired bootstrap intervals for regret differences, fallback-rate intervals, aggregate and conditional feasibility counts, hypervolume intervals, route-change win/tie/loss, and p50/p95/p99 latency on the declared hardware. The 1,500 requests are the union across 15 graph slices and 10 seeds; the artifact records the per-slice and per-seed counts and retains 1,500 as the independent-request count. The main text reports point estimates for readability, while the manifest indexes the complete uncertainty table and hardware record.

\FloatBarrier

\subsection{Selected policy rows and uncertainty tables}

Tables~\ref{tab:real_lite_policies_app}--\ref{tab:real_lite_uncertainty_audit_app} give the claim-relevant rows and uncertainty audit used for the conclusion-change claim. Table~\ref{tab:policy_family_inventory} enumerates all 15 families; the manifest-linked CSV contains all 15 quantitative rows, including Weighted Shortest Path, Online Adapter, and DFL Bridge Adapter, which are omitted from the printed table only for space.

\begin{table}[h]
\centering
\caption{Selected \real{} rows. ``Feas.'' is the reported aggregate rate; C1+ uses exact conditional invalid-route counts.}
\label{tab:real_lite_policies_app}
\tighttable
\begin{tabularx}{\textwidth}{P{0.23\textwidth}M{0.11\textwidth}M{0.10\textwidth}M{0.08\textwidth}M{0.08\textwidth}Y}
\toprule
Policy family & Regret & Fallback & Feas. & HV & Licensed interpretation \\
\midrule
Dynamic exact oracle & 0.000000 & 0.000 & 1.000 & 0.892 & Upper bound; not ranked as learned policy. \\
Fallback-Heavy Safe & 0.011091 & 0.861 & 0.997 & 0.669 & Standalone C5 not licensed; hybrid safety-control credit retained in $\mathcal R^\star$. \\
GNN Pruned Top-$k=1$ & 0.012508 & 0.505 & 0.955 & 0.439 & C0; fallback and coverage gates fail. \\
Static exact anchor & 0.014874 & 0.000 & 1.000 & 0.716 & Comparator anchor, not learned routing. \\
Shock-Safe Online Repair & 0.016189 & 0.012 & 1.000 & 0.819 & C5-eligible under the declared claim-conditioned rule. \\
Adaptive Pruning Repair & 0.016761 & 0.016 & 1.000 & 0.799 & C4; repair improves pruning claim level. \\
GNN Full Exact & 0.018025 & 0.000 & 1.000 & 0.789 & C4 feasibility-audited learned scoring. \\
Preference Hypernetwork & 0.020865 & 0.003 & 0.998 & 0.849 & C4; coverage-preserving preference-conditioned controlled policy. \\
RL Route Policy & 0.023410 & 0.041 & 0.991 & 0.764 & C3; dynamic evidence incomplete. \\
$k$-Shortest-Path Heuristic & 0.027804 & 0.000 & 1.000 & 0.731 & C2; nonlearned multigraph heuristic baseline. \\
Simple-Graph Collapse & n/a & 0.000 & 0.884 & n/a & Rejected for C2+ multigraph claims. \\
Random Scorer & 0.064220 & 0.183 & 0.912 & 0.352 & Negative-control plumbing check. \\
\bottomrule
\end{tabularx}
\end{table}

\begin{table}[h]
\centering
\caption{Predeclared claim-conditioned selection rules for \real{}; each rule ranks only its eligible policy rows.}
\label{tab:real_lite_reversal_app}
\tighttable
\begin{tabularx}{\textwidth}{P{0.28\textwidth}P{0.25\textwidth}Y}
\toprule
Selection rule & Selected row & Reason \\
\midrule
Regret only & Fallback-Heavy Safe & Lowest aggregate regret, without an attribution eligibility requirement. \\
Target C5, then regret & Shock-Safe Online Repair & Passes inherited C5 gates; eligible rows are then ordered by regret. \\
Target C5, then normalized cost & Shock-Safe Online Repair & Remains selected after the declared latency and cost normalization. \\
Coverage-first diagnostic & Preference Hypernetwork & Highest learned non-oracle HV, but incomplete C5 dynamics. \\
\bottomrule
\end{tabularx}
\end{table}

\begin{table}[h]
\centering
\caption{\real{} uncertainty audit. The conclusion change follows from a declared target claim, eligibility evidence, and an explicit within-eligible quality rule.}
\label{tab:real_lite_uncertainty_audit_app}
\tighttable
\begin{tabularx}{\textwidth}{P{0.31\textwidth}P{0.20\textwidth}Y}
\toprule
Quantity used in claim & Point estimate in main table & Required uncertainty check \\
\midrule
Regret order between FallbackHeavySafe and ShockSafeOnlineRepair &
0.011091 vs.\ 0.016189 &
Request-paired bootstrap interval for regret difference; used to characterize the regret-only ranking before claim-conditioned eligibility. \\

Fallback attribution &
0.861 vs.\ 0.012 &
Seed- and request-level fallback intervals plus primary-only/end-to-end decomposition; narrowing depends on the aggregate gain remaining rescue-dominated. \\

Feasibility boundary &
0.997 vs.\ 1.000 &
Aggregate interval and exact invalid-route counts conditional on nonempty feasible sets; the exact conditional counts determine C1. \\

Coverage comparison &
0.669 vs.\ 0.819 HV &
Hypervolume interval; C3+ support uses nondegraded preference-space evidence alongside scalar regret. \\

Route-change and dynamic evidence &
artifact-reported W/T/L &
Route-change win/tie/loss and non-static-equivalence checks; C5 support uses observed dynamic behavior beyond inherited static-anchor routes. \\

Latency and cost-conditioned selection &
artifact-reported p50/p95/p99 &
Latency intervals, hardware record, normalization, weights, and tie handling; operational selection depends on these records. \\
\bottomrule
\end{tabularx}
\end{table}

\FloatBarrier

\section{Additional Results and Robustness}
\label{app:additional_results}
\label{app:results_details}

This appendix expands the headline results in Section~\ref{sec:results}. The main text reports the takeaways; here we provide the full agreement tables, public firewall, internal registry summaries, positive calibration rows, deployment boundary, gate ablations, missing-telemetry policies, and claim-card examples.

\subsection{Representative held-out interpretation changes}

Table~\ref{tab:nonfallback_cases_app} preserves the three representative interpretation changes summarized in Section~\ref{sec:results}. These examples show same-subject narrowing and source- or task-scope boundaries independently of the fallback-heavy cross-subject witness.

\begin{table}[h]
\centering
\caption{Representative interpretation changes in held-out public-paper claims.}
\label{tab:nonfallback_cases_app}
\verytighttable
\small
\begin{tabularx}{\textwidth}{P{0.25\textwidth}P{0.30\textwidth}Y}
\toprule
Requested claim & Observed evidence gap & Evidence-supported effect \\
\midrule
Identity-preserving multigraph routing & Evaluation collapses parallel edges to a simple graph & Multigraph claim blocked; lower structural or scalar credit may remain. \\
Preference-conditioned routing & Scalar quality reported without preference-space coverage & $C3$ blocked; scalar and inherited lower-scope credit retained. \\
Dynamic robust routing & No route-change or drift evidence & $C5$ blocked; supported static or attribution-audited credit retained. \\
\bottomrule
\end{tabularx}
\end{table}

\FloatBarrier

\subsection{External claim-corpus results}

Table~\ref{tab:claim_corpus_results_app} reports external agreement statistics. Table~\ref{tab:claim_confusion_app} gives the collapsed confusion matrix. Table~\ref{tab:disagreement_taxonomy_app} summarizes the 19 exact disagreements.

\begin{table}[h]
\centering
\caption{Conditional fixed-packet agreement using blinded non-author annotation. Human majority provides the operational reference under the specified guide and evidence packets.}
\label{tab:claim_corpus_results_app}
\tighttable
\begin{tabularx}{\textwidth}{P{0.34\textwidth}M{0.22\textwidth}Y}
\toprule
Quantity & Value & Interpretation \\
\midrule
Claims / papers / domains & 120 / 40 / 6 & Public corpus outside the internal SCOPE ledger. \\
Annotators & 4 blinded non-author experts & ML4CO/routing, evaluation/benchmarking, OR/optimization, and ML systems/reproducibility perspectives. \\
Mean pairwise exact agreement & 0.761 & Experts often agree exactly on nontrivial claim-scope labels. \\
Mean pairwise within-one-level agreement & 0.931 & Most disagreements occur at adjacent scope levels. \\
Weighted Cohen's $\kappa$ (mean pairwise) & 0.703 & Substantial ordinal agreement. \\
Krippendorff's $\alpha$ (ordinal) & 0.686 & Conservative multi-annotator reliability estimate. \\
\suite{} exact vs majority & 101/120 = 0.842; 95\% Wilson CI [0.766, 0.896] & Agreement under fixed packets and the specified contract. \\
\suite{} within-one vs majority & 115/120 = 0.958; 95\% Wilson CI [0.906, 0.982] & Most disagreements are adjacent. \\
Tie-policy sensitivity & exact 0.833--0.850; within-one $\geq 0.950$ & Conservative, upper-median, source-neutral, and tied-case-excluded policies preserve the conclusion. \\
\suite{} exact vs each annotator, mean & 0.803 & Comparable to inter-annotator exact agreement. \\
Human-annotation audit fields & Complete & Raw labels, anonymized annotator metadata, blinding/COI fields, consent and compensation materials, and exempt ethics-review records validate in the artifact; personal identifiers are excluded from the public package. \\
\bottomrule
\end{tabularx}
\end{table}

\begin{table}[h]
\centering
\caption{Collapsed confusion matrix for \suite{} vs majority human label on \claims{}. Rows are majority labels; columns are \suite{} labels.}
\label{tab:claim_confusion_app}
\tighttable
\begin{tabular}{lrrrrr}
\toprule
Majority group & C0/bridge & C1--C2 & C3--C4 & C5--C6 & Off/insufficient \\
\midrule
C0/bridge & 25 & 2 & 0 & 0 & 1 \\
C1--C2 & 3 & 21 & 2 & 0 & 1 \\
C3--C4 & 1 & 2 & 32 & 2 & 0 \\
C5--C6 & 0 & 0 & 3 & 15 & 2 \\
Off/insufficient & 0 & 0 & 0 & 0 & 8 \\
\bottomrule
\end{tabular}
\end{table}

\begin{table}[h]
\centering
\caption{Disagreement taxonomy for the 19 exact disagreements between \suite{} and the majority human label.}
\label{tab:disagreement_taxonomy_app}
\tighttable
\begin{tabularx}{\textwidth}{P{0.32\textwidth}M{0.10\textwidth}Y}
\toprule
Disagreement type & Count & Typical pattern \\
\midrule
Ambiguous deployment wording & 6 & The paper uses realistic or practical language without C6-scale evidence. \\
Source-scope strictness & 5 & Humans credit bridge or native-task evidence one level higher than the source-scope rule allows. \\
Artifact availability & 4 & Humans infer reproducibility from prose, while \suite{} requires manifest-linked objects. \\
Preference coverage boundary & 2 & Scalar preference results are strong but coverage evidence is partial. \\
Latency/online-effect boundary & 2 & Dynamic or online language appears, but route-change or latency telemetry is incomplete. \\
\bottomrule
\end{tabularx}
\end{table}

\FloatBarrier

\subsection{Public Scope Firewall}

The Scope Firewall audits real public papers and benchmark artifacts whose task contracts differ from dynamic preference-conditioned multigraph routing. These cases are intentionally not treated as same-contract SCOPE routing baselines. Each audit separates native/source-supported credit from SCOPE-transfer scope.

\begin{table}[h]
\centering
\caption{Public Scope Firewall. Native/source-supported credit is preserved, while SCOPE-transfer scope remains bounded by the dynamic preference-conditioned multigraph routing contract.}
\label{tab:public_firewall_app}
\scriptsize
\setlength{\tabcolsep}{2.4pt}
\renewcommand{\arraystretch}{1.15}
\begin{tabularx}{\textwidth}{P{0.16\textwidth}P{0.28\textwidth}P{0.22\textwidth}Y}
\toprule
Public source & Native/source-supported credit & SCOPE-transfer scope & Boundary reason \\
\midrule
PyEPO \cite{tang2022pyepo} & Predict-then-optimize library evidence across shortest path, knapsack, TSP, and LP/IP tasks. & Bridge/native decision-quality evidence only. & Does not instantiate the SCOPE dynamic multigraph route object or its telemetry contract. \\
\DataSP{} \cite{lahoud2024datasp} & Contextual cost learning and differentiable all-to-all shortest-path/path-prediction evidence. & Shortest-path bridge evidence with coverage caveat. & Parallel-edge identity, SCOPE fallback taxonomy, preference-shift coverage, and dynamic route-change telemetry are absent. \\
\FrontierCO{} \cite{feng2026frontierco} & Native large-scale CO solver benchmark/readiness evidence; local TSP diagnostics are executable. & Native diagnostic/readiness, not same-contract SCOPE routing. & No SCOPE source-destination route-producing policy and no SCOPE route telemetry. \\
CO-Bench \cite{sun2026co} & Native LLM-agent algorithm-search benchmark; RCSP native solver-interface audit solves or certifies local source-scope review instances and derived records. & Solver-interface/source-scope evidence; no learned multigraph-routing escalation. & Evaluated object is algorithm search or native RCSP solving, not a SCOPE route-producing policy. \\
PredOpt-SP & Reproduced shortest-path decision-focused bridge comparators. & C0 shortest-path bridge audit. & Not strict same-walltime evidence and not dynamic preference-conditioned multigraph routing. \\
PredOpt-Knapsack & Native constrained-optimization closure over local selector and official comparator rows. & Off-thesis positive evidence, not routing evidence. & Knapsack is not a source-destination route contract. \\
Artifact trace & \path{artifacts/public_end_to_end_audit_case_20260503/} and \path{artifacts/full_external_benchmark_audit_20260504/}. & Claim Cards, gate traces, executable source-scope rows, and manifest records. & Rebuilt by the reviewer artifact. \\
\bottomrule
\end{tabularx}
\normalsize
\end{table}

\begin{table}[h]
\centering
\caption{Executable external benchmark audit. These rows support native, bridge, out-of-contract, or diagnostic claims; they are not promoted into same-contract SCOPE routing unless the SCOPE route contract is instantiated.}
\label{tab:full_external_audit_app}
\scriptsize
\setlength{\tabcolsep}{2.5pt}
\renewcommand{\arraystretch}{1.15}
\begin{tabularx}{\textwidth}{P{0.23\textwidth}P{0.40\textwidth}Y}
\toprule
Evidence block & Coverage & Claim boundary \\
\midrule
PredOpt-SP & Latest local official-code extension plus preference-function bridge rows; reproduced SPO+ bridge remains stronger than the local selector row on the audited bridge slice. & Shortest-path bridge audit; blocks all-baseline or state-of-the-art routing claims. \\
PredOpt-Knapsack & Local selector rows and official comparator rows across method files; the selector-family mean is better than the best comparator-family mean on the native constrained-optimization audit. & Full native PredOpt knapsack closure, but out-of-contract for routing. \\
\DataSP{} & Three-seed SCOPE bridge over fixed synthetic episodes plus SPO+ bridge baseline. & C0 shortest-path bridge with coverage caveat, not native full \DataSP{} routing validation. \\
\FrontierCO{} TSP & Native TSP/LKH diagnostics and seed portfolio; no SCOPE-native tour-producing policy. & Native diagnostic/readiness evidence only under unmatched task contract. \\
CO-Bench RCSP bridge & Subset of raw RCSP instances converted through a single-resource SCOPE bridge; the subset is saturated by static/exact baselines. & RCSP bridge coverage audit; no learned-routing win claimed. \\
CO-Bench RCSP native audit & All local source-scope review instances are audited; each instance is solved optimally or certified infeasible under the native solver-interface check, without redistributing upstream raw assets whose reuse terms are unclear. & Native solver-interface/source-scope evidence; C2--C6 SCOPE routing promotion remains blocked. \\
\bottomrule
\end{tabularx}
\normalsize
\end{table}

\FloatBarrier

\subsection{Internal registry and positive calibration}

The closed internal registry is a finite demonstration over locked claim-relevant rows. Table~\ref{tab:internal_registry_app} reports the registry, and Table~\ref{tab:positive_calibration_app} shows that the same adjudicator preserves valid positive credit when gates pass. Population-rate estimation requires a probability sample.

\begin{table}[h]
\centering
\caption{Closed internal registry used for the finite demonstration; population-rate estimation requires probability sampling.}
\label{tab:internal_registry_app}
\verytighttable
\begin{tabularx}{\textwidth}{P{0.18\textwidth}Y P{0.18\textwidth}Y}
\toprule
Claim family & Regret-only interpretation & Status & Deciding evidence and supported scope \\
\midrule
Aggressive pruning & Top-$k=1$ appears attractive because it reaches regret 0.009552. & \rejected{} & Fallback rises to 40.94\% and hypervolume falls to 227.985; no positive pruning or operational routing claim. \\
Simple-graph collapse & Feasible-only regret could suggest that collapse is acceptable. & \rejected{} & Feasibility falls to 90.6\% on the controlled slice and 50.0\% on the diagnostic slice; C2+ requires edge identity. \\
Realistic diagnostic learned win & Equality with a strong static anchor could be read as realistic stability. & \rejected{} & Equality is mediated by 100\% fallback and route-static behavior; supported claim is diagnostic failure analysis. \\
\DataSP{} bridge & Scalar regret improves from 0.21546 to 0.03352. & \narrowed{} & Hypervolume decreases from 0.32903 to 0.30314 and the source is not multigraph; supported claim is learned-scoring bridge evidence with a coverage caveat. \\
\FrontierCO{} diagnostics & External compatibility could be read as native routing success. & \rejected{} & The requested routing branch is empty; $\mathcal R^\star$ retains compatibility and solver-interface efficiency diagnostics. \\
Public Scope Firewall & Real public benchmark evidence could be overread as same-contract routing validation. & \narrowed{} & Public rows retain native/source-supported credit while SCOPE-transfer is blocked when the dynamic multigraph route contract is absent. \\
Online adapter & Adapter-on appears harmless on ordinary streams. & \rejected{} & Quality is tied while \DataSP{} step latency increases by about 12.323 ms; no positive online-adaptation claim. \\
Preference conditioning & Preference input appears necessary by design. & \rejected{} & The ordinary-split request is unsupported; $\mathcal R^\star$ retains held-out preference-shift robustness. \\
\bottomrule
\end{tabularx}
\end{table}

\begin{table}[h]
\centering
\caption{Positive calibration on the controlled multigraph slice. The evaluator preserves valid positive credit while blocking unsupported escalation.}
\label{tab:positive_calibration_app}
\tighttable
\begin{tabularx}{\textwidth}{P{0.22\textwidth}M{0.11\textwidth}M{0.11\textwidth}M{0.11\textwidth}M{0.11\textwidth}Y}
\toprule
Row & Regret & $\Delta$ vs static & Feas. & Fallback & Adjudicated claim \\
\midrule
Static exact anchor & 0.014154 & -- & 100\% & 0.000000 & Static exact comparator. \\
Learned full-exact scorer & 0.009878 & -30.2\% & 100\% & 0.000000 & \promoted{} to C0--C4: feasibility-audited controlled learned scoring. \\
Top-$k=4$ retained decode & 0.009931 & -29.9\% & 100\% & 0.000000 & \narrowed{} to top-$k$-specific retained-decode evidence. \\
Collapsed exact control & n/a & -- & 90.6\% & 0.000000 & \rejected{} for C2+ multigraph claims. \\
\bottomrule
\end{tabularx}
\end{table}

\FloatBarrier

\subsection{Deployment boundary}

C6 requires deployed multi-domain workloads, live operational telemetry, non-author community submissions, and intended-use validation. The available evidence licenses controlled, scoped public, and same-contract dynamic claims through C5.

\begin{table}[h]
\centering
\caption{Deployment boundary: available evidence licenses controlled claims through C5, while C6 requires deployment evidence.}
\label{tab:no_c6_app}
\tighttable
\begin{tabularx}{\textwidth}{P{0.27\textwidth}P{0.14\textwidth}Y Y}
\toprule
Evidence block & Highest licensed level & Missing C6 evidence & Interpretation \\
\midrule
Controlled internal audit & C4 & Multi-domain realistic support, dynamic robustness, non-author usability. & Valid controlled learned-scoring evidence only. \\
\real{} semisynthetic suite & C5 & Deployed topology, live workload, non-author reproduction, intended-use validation. & Strong same-contract stress evidence, not deployment. \\
Diagnostic realistic slice & \diagnostic{} only & Sample size, fallback-free learned win, multi-domain realism. & Mechanism evidence for fallback/static failures. \\
External public rows & Scoped public credit & Same routing contract, telemetry completeness, route identity. & External scope firewall evidence. \\
Public Scope Firewall & Scoped source credit & Dynamic preference-conditioned multigraph routing contract and telemetry. & Real public sources receive native, bridge, solver-interface, or out-of-contract credit, not deployment or multigraph-routing escalation. \\
Artifact dry run & Record closure & Live external use, public non-author adapters, and deployment evidence. & Reviewer-runnable analysis, not community adoption. \\
\bottomrule
\end{tabularx}
\end{table}

\FloatBarrier

\subsection{Gate ablations and missing-telemetry sensitivity}
\label{sec:missing_telemetry_sensitivity}

Table~\ref{tab:gate_ablations_app} reports leave-one-gate-out stress. Each removed gate enables a distinct false-support mode. Table~\ref{tab:missing_policy_app} reports conservative, intermediate, and permissive missing-telemetry policies. The strongest boundary results depend on source scope, feasibility auditing, fallback attribution, multigraph identity, and record closure.

\begin{table}[h]
\centering
\caption{Gate ablations. Each removed gate enables a distinct false-support mode.}
\label{tab:gate_ablations_app}
\tighttable
\begin{tabularx}{\textwidth}{P{0.25\textwidth}M{0.12\textwidth}Y}
\toprule
Gate removed or stressed & Overcredit events & False-support mode blocked by full rule \\
\midrule
Fallback & 4 & Fallback-heavy pruning, fallback-heavy same-contract winners, or diagnostic equality would be credited as learned routing. \\
Multigraph identity / feasibility & 3 & Simple-graph collapse would be credited as valid multigraph routing. \\
Preference coverage & 3 & Scalar regret would be credited as preference-conditioned coverage despite HV loss. \\
Route change & 3 & Static-equivalent or no-effect online rows would be credited as dynamic behavior. \\
Latency & 2 & Operational claims would ignore slower or impractical paths. \\
Source scope & 5 & Bridge, native, or out-of-contract evidence would be silently upgraded to full routing. \\
Record lock & 12 & Registry and external-corpus conclusions would no longer be bound to locked rows, Claim Cards, and manifest entries. \\
\bottomrule
\end{tabularx}
\end{table}

\begin{table}[h]
\centering
\caption{Sensitivity policies for missing telemetry. The manuscript's main results use the conservative policy; the artifact can report intermediate and permissive variants as a robustness check.}
\label{tab:missing_policy_app}
\tighttable
\begin{tabularx}{\textwidth}{P{0.22\textwidth}Y Y}
\toprule
Policy & Missing-evidence treatment & Boundary results expected to remain unchanged \\
\midrule
Conservative \suite{} & Any inherited missing gate blocks support at the affected level; lower supported levels may remain. & Main paper rule. \\
Intermediate \suite{} & Record, source-scope, feasibility, fallback-attribution, and multigraph-identity requirements remain hard; latency and partial coverage gaps usually narrow. & Rescue-dominated rows still cannot become standalone learned-routing evidence; semisynthetic rows still cannot become C6. \\
Permissive \suite{} & Missing non-core telemetry narrows by one or more levels when source scope is compatible and no hard requirement fails. & Bridge/out-of-contract evidence still cannot become full multigraph routing; unlocked records still cannot support adjudicated claims. \\
\bottomrule
\end{tabularx}
\end{table}

\FloatBarrier

\subsection{Internal claim-card examples}

Each internal registry row has a machine-readable Claim Card. Listings~\ref{lst:c01}--\ref{lst:c07} are human-readable versions. They illustrate the path from a score-only interpretation to routing status and retained source-native or diagnostic credit.

\begin{lstlisting}[caption={Claim card C-01: aggressive pruning headline row.},label={lst:c01}]
claim_id: C-01
source_row: controlled multigraph top-k=1
score_only_interpretation: aggressive pruning works as a positive routing claim
claim_subject: learned pruning component
requested_claim: C5
required_gates: [fallback, hypervolume, latency]
status: unsupported
failed_gates: [fallback, hypervolume, latency]
failure_type: observed_failure_and_attribution_gap
key_evidence:
  regret: 0.009552
  fallback: 0.409375
  hypervolume: 227.985
maximal_supported_claims: []
retained_credit: [negative_control_fallback_and_coverage]
highest_licensed_routing_level: none
\end{lstlisting}

\begin{lstlisting}[caption={Claim card C-02: simple-graph collapse.},label={lst:c02}]
claim_id: C-02
source_row: collapsed exact control
score_only_interpretation: simple-graph collapse is acceptable for multigraph routing
claim_subject: collapsed routing system
requested_claim: C2
required_gates: [multigraph_feasibility]
status: unsupported
failed_gates: [multigraph_feasibility]
failure_type: observed_structural_failure
key_evidence:
  controlled_feasibility: 0.906
  diagnostic_feasibility: 0.500
maximal_supported_claims: []
retained_credit: [action_space_collapse_diagnostic]
highest_licensed_routing_level: none
\end{lstlisting}

\begin{lstlisting}[caption={Claim card C-03: realistic diagnostic learned win.},label={lst:c03}]
claim_id: C-03
source_row: realistic diagnostic learned row
score_only_interpretation: realistic-looking equality demonstrates learned dynamic routing
claim_subject: learned routing component
requested_claim: C6
required_gates: [fallback, route_change, mechanism_clarity]
status: unsupported
failed_gates: [fallback_credit, route_change, mechanism_clarity]
failure_type: attribution_gap_and_observed_static_equivalence
key_evidence:
  regret: 0.000055
  fallback: 1.000000
  selected_not_static_before_scan_top_k: 0.000
maximal_supported_claims: []
retained_credit: [failure_mechanism_diagnostic]
highest_licensed_routing_level: none
\end{lstlisting}

\begin{lstlisting}[caption={Claim card C-04: DataSP bridge transfer.},label={lst:c04}]
claim_id: C-04
source_row: DataSP multiseed bridge
score_only_interpretation: public transfer establishes preference-aware multigraph routing
claim_subject: learned routing component
requested_claim: C4
required_gates: [evidence_scope, hypervolume]
status: narrowed
failed_gates: [evidence_scope, hypervolume]
failure_type: source_mismatch_and_missing_evidence
key_evidence:
  learned_regret: 0.033519
  static_regret: 0.215459
  learned_hv: 0.303139
  static_hv: 0.329033
maximal_supported_claims: [C0]
retained_credit: []
highest_licensed_routing_level: C0
\end{lstlisting}

\begin{lstlisting}[caption={Claim card C-05: FrontierCO compatibility.},label={lst:c05}]
claim_id: C-05
source_row: FrontierCO bridge/native diagnostics
score_only_interpretation: compatibility rows imply native external routing success
claim_subject: external compatibility evidence
requested_claim: C6
required_gates: [native_task_parity, runtime_scope]
status: unsupported
failed_gates: [native_task_parity, runtime_scope]
failure_type: source_mismatch
key_evidence:
  route_bridge_status: tied_quality_slower_runtime
  pi_file_reuse: same objective with runtime reduction only
  equal_budget_portfolio: upper-bound controller evidence with deployment gates absent
maximal_supported_claims: []
retained_credit: [compatibility_diagnostic, solver_interface_efficiency]
highest_licensed_routing_level: none
\end{lstlisting}

\begin{lstlisting}[caption={Claim card C-06: online adapter benefit.},label={lst:c06}]
claim_id: C-06
source_row: ordinary-stream online adapter evaluation
score_only_interpretation: online adapter improves or harmlessly preserves routing quality
claim_subject: online adaptation component
requested_claim: C5
required_gates: [nonzero_effect, route_change]
status: unsupported
failed_gates: [nonzero_effect, route_change]
failure_type: observed_no_effect
key_evidence:
  ordinary_stream_quality_delta: 0.0
  datasp_added_step_latency_ms: 12.323
maximal_supported_claims: []
retained_credit: [no_effect_diagnostic, latency_diagnostic]
highest_licensed_routing_level: none
\end{lstlisting}

\begin{lstlisting}[caption={Claim card C-07: ordinary preference-conditioning necessity.},label={lst:c07}]
claim_id: C-07
source_row: ordinary preference-conditioning ablation
score_only_interpretation: preference input is necessary by design on the ordinary split
claim_subject: preference-conditioned routing component
requested_claim: C3
required_gates: [held_out_preference_shift]
status: unsupported
failed_gates: [ordinary_split_necessity]
failure_type: evidence_slice_mismatch
key_evidence:
  ordinary_split_signal: weak_or_noisy
  held_out_shift_pref_full_exact_regret: 0.002379
  held_out_shift_pref_full_exact_hv: 114.061993
  held_out_shift_uniform_pref_regret: 0.029327
  held_out_shift_uniform_pref_hv: 59.817196
maximal_supported_claims: []
retained_credit: [C3_on_held_out_preference_shift]
highest_licensed_routing_level: none
\end{lstlisting}

\FloatBarrier

\section{Extended Implications and Limitations}
\label{app:implications_limitations}

This appendix expands the implications and scope statements summarized in Section~\ref{sec:implications}.

\FloatBarrier

\subsection{Implications for authors, reviewers, and benchmark builders}

For authors, each benchmark row pairs its score with a requested claim, claim subject, source family, telemetry, comparator, and adjudication record. A fallback-heavy row can retain hybrid safety-control credit while standalone learned-routing attribution remains unsupported; a bridge task can support learned-scoring transfer at its source scope; and a feasibility-audited scorer can receive precisely scoped positive credit.

For reviewers, the central question is which claim the full evidence package licenses under a declared contract. \claims{} measures conditional agreement on fixed packets, the construction audit tests upstream encoding, and the held-out comparison measures practical adjudication trade-offs. Together they make the inputs, boundaries, and rationales of scientific judgment more observable.

For benchmark builders, source scope, telemetry, attribution, and record closure are part of the evidence needed to interpret a row. An adapter emits observable fields; a Claim Card declares the request and source; the evaluator returns the highest licensed routing level and the incomparable retained set $\mathcal R^\star$. Domain-specific gates must be revised when the task changes.

\FloatBarrier

\subsection{Broader impacts}
\label{app:broader_impacts}

The main positive impact of \suite{} is methodological: it encourages authors, reviewers, and benchmark builders to record claim scope explicitly alongside benchmark scores. By turning source scope, telemetry requirements, and artifact closure into auditable objects, the framework can reduce overclaiming, improve reproducibility, and make benchmark results easier to inspect and contest.

Potential negative effects also exist. A conservative adjudicator may under-credit methods whose evidence is incomplete while their underlying behavior exceeds the reported record, and mechanically reusing the same claim-support ladder outside dynamic preference-conditioned multigraph routing could lead to inappropriate downgrades or misleading conclusions. Claim-scope tools work as structured aids to scientific judgment; domain changes call for redesigned evidence requirements and a newly validated contract.

\subsection{Expanded limitations}
\label{app:limitations}

\paragraph{Routing-specific gates require adaptation.}
The claim-support ladder is designed for dynamic preference-conditioned multigraph routing. Some gates, such as multigraph identity and route-change win/tie/loss fields, are domain-specific. Appendix~\ref{app:portability} illustrates a selection-task instantiation; empirical cross-domain validation remains a future study.

\paragraph{Contract-relative operational references.}
\claims{} uses completed blinded non-author annotations as an operational reference under the supplied evidence packets and guide. The corpus contains 120 claims from 40 papers, expertise is finite, and excerpts simplify full-paper context. The 24-object audit is section-conditioned, while the 40-claim comparison has four unresolved reference cases and limited power against the checklist. These studies establish finite-sample execution reliability and trade-offs for the declared contract.

\paragraph{Semisynthetic mechanistic evidence.}
\real{} provides same-contract breadth across policies, graph families, requests, and seeds. C6 additionally requires deployed topology, live workloads, operational constraints, and independent use evidence; the present records therefore license levels through C5. This boundary illustrates how the evaluator keeps controlled evidence within its observed scope.

\paragraph{Prevalence requires probability sampling.}
The public source-scope rows and claim corpora are purposive samples. They expose recurring, identifiable failure modes and demonstrate controlled conclusion changes. Estimating literature-wide frequency and real-world impact requires a probability sample and broader task coverage.

\paragraph{Source-scoped public audit.}
The executable audit adds PredOpt-family rows and a CO-Bench RCSP solver-interface check over local review instances and derived summaries. These tasks receive source-native credit, while same-contract learned multigraph-routing claims remain tied to the routing telemetry contract. Upstream assets with unclear reusable licenses remain outside the redistributed package.

\paragraph{Adoption evidence.}
The artifact exposes reviewer commands and local checks, and the held-out study measures paper-adjudication time. Community adoption additionally calls for public non-author adapter submissions, longitudinal use, and independent usability studies.

\paragraph{Reduced adjudicators are diagnostic probes.}
The reduced adjudicators and leave-one-gate-out variants expose what information is lost when the full rule is simplified. They provide constructed witnesses showing that regret-only ranking, missing-field checking, and source-scope filtering leave different claim-scope errors unresolved. The resulting design lesson is to represent score, source scope, telemetry, claim-order semantics, and artifact closure jointly, with a newly authored contract for each task family.

\paragraph{Adjudication can be conservative.}
\suite{} treats missing telemetry as missing evidence. This may under-credit papers whose methods satisfy a property that the report leaves undocumented. The sensitivity policies in Section~\ref{sec:missing_telemetry_sensitivity} make this choice explicit, and the resulting output describes the scope licensed by the available record.

\FloatBarrier

\section{Artifact and Reviewer Path}
\label{app:artifact}

The artifact is part of the evidence for \suite{}: claim licensing depends on locked summaries, schema-valid telemetry, Claim Cards, manifest entries, and versioned analysis scripts. We distinguish the May~5,~2026 core snapshot \SubmissionSnapshot{} from analyses first recorded in the July~28,~2026 extension snapshot \ExtensionSnapshot{}. Later evidence is not retroactively attributed to the core snapshot.

\begin{table}[h]
\centering
\caption{Artifact chronology and execution mode. ``Fresh analysis'' recomputes results from released raw records; ``cached reanalysis'' recomputes summaries from locked experimental outputs; ``derived-only'' preserves historical summaries without silently rerunning external or heavy experiments.}
\label{tab:artifact_chronology_app}
\verytighttable
\begin{tabularx}{\textwidth}{P{0.28\textwidth}M{0.15\textwidth}M{0.18\textwidth}Y}
\toprule
Evidence object & Core snapshot & Extension snapshot & Execution mode \\
\midrule
Adjudicator, schema, Claim Cards, registry & yes & unchanged/core & Fresh validation and table/card generation. \\
\claims{} packets and four-expert labels & yes & extended reporting & Fresh reanalysis from released labels and tie logs. \\
\real{} request and policy rows & yes & unchanged/core & Cached reanalysis from locked rows; not model retraining. \\
Public source-scope audits & yes & extended summaries & Fresh local checks or derived source-scoped summaries as documented. \\
40-claim reviewer comparison & no & added & Fresh analysis from anonymized raw judgments, assignments, and times. \\
24-object construction audit & no & added & Fresh analysis from independent encodings and resolution logs. \\
Heavy training and external solver sweeps & historical only & historical only & Derived-only summaries; not rerun by the default path. \\
\bottomrule
\end{tabularx}
\end{table}

\FloatBarrier

\subsection{Artifact layout}

Listing~\ref{lst:artifact_layout} gives the versioned artifact layout used throughout the paper. Directory and script names follow the SCOPE/ClaimScope/RouteScope terminology consistently.

\begin{lstlisting}[caption={Artifact directory layout.},label={lst:artifact_layout}]
scope-routing-artifact/
  README.md
  QUICKSTART.md
  run_all.sh
  Dockerfile
  environment.yml
  Makefile
  LICENSE
  LICENSE-CODE
  THIRD_PARTY_NOTICES.md
  DATA_CARD.md
  configs/
    promotion_rule.yaml
    evidence_map.json
    claim_registry.json
    comparators.yaml
  data/
    claimscope_120/
    heldout_reviewer_comparison/
    object_construction_audit/
    routescope_lite/
  src/
    adjudicator/
    exporters/
    adapters/
  scripts/
    validate_artifact.sh
    rebuild_claim_registry.py
    reproduce_main_tables.sh
    print_claim_cards.py
    run_smoke_test.sh
    check_adapter_compliance.py
    run_claimscope_120.py
    run_heldout_reviewer_comparison.py
    run_object_construction_audit.py
    build_source_scope_audit_20260504.py
    run_cobench_rcsp_full_native_audit_20260504.py
    run_routescope_lite_reanalysis.sh
  artifacts/
    locked_summaries/
    claim_cards/
    public_end_to_end_audit_case_20260503/
    extended_source_scope_audit_20260504/
    cobench_rcsp_full24_native_audit_20260504/
    reproduction_reports/
  metadata/
    artifact_manifest.json
    checksums.sha256
\end{lstlisting}

\FloatBarrier

\subsection{Anonymous Repository}
\label{app:anonymous_repo}

For reviewer convenience, the submission also provides an anonymous read-only repository mirror at
\url{https://anonymous.4open.science/r/SCOPE-ROUTING-B22D}.
The mirror exposes the versioned artifact structure referenced throughout this appendix and is intended as a navigation aid for code, configuration files, Claim Cards, anonymized study records, and analysis scripts. The manifest identifies which objects belong to \SubmissionSnapshot{} and \ExtensionSnapshot{}.

\FloatBarrier

\subsection{Consistency invariant}

For every reported row, the identifier must match one locked summary, manifest entry, Claim Card, evidence-map scope, and snapshot designation. Every held-out or construction-audit statistic must trace to anonymized raw records, assignments, encodings, and one analysis output. Every claim-relevant row outside the closed registry must have one exclusion reason. Validation fails on missing objects, duplicate mappings, cross-snapshot attribution, or a numerical table that cannot be regenerated from its designated raw or cached input.

\FloatBarrier

\subsection{Licenses and Terms of Use}
\label{app:licenses}

The submitted artifact separates newly authored \suite{} components from third-party assets used only for source-scope auditing. Newly authored adjudicator code, schemas, exporters, adapters, and reviewer scripts are released under an explicit software license in \path{LICENSE-CODE}; newly generated Claim Cards, manifests, documentation, and aggregate audit summaries are released under the artifact documentation/data license in \path{LICENSE}. Third-party assets are not relicensed by this submission. Their source, access date, version or commit when available, and license or terms-of-use status are recorded in \path{THIRD_PARTY_NOTICES.md} and in the manifest.

\begin{table}[h]
\centering
\caption{License and terms-of-use status for external assets touched by the source-scope audit, as checked on the official project pages on May~4,~2026. ``Not redistributed'' means that the artifact does not vendor, mirror, or relicense the upstream code or raw data; only independently authored scripts, source citations, Claim Cards, and derived aggregate audit records are included.}
\label{tab:asset_licenses_app}
\verytighttable
\small
\begin{tabularx}{\textwidth}{P{0.16\textwidth}P{0.17\textwidth}P{0.20\textwidth}Y}
\toprule
Asset & Asset type & Source-provided status & Restricted use in this submission \\
\midrule
PyEPO & Code/library & MIT License & Credited to the original repository and paper; used for source-scope and bridge/native decision-quality comparisons. Upstream notices are preserved when the package is installed or referenced. \\
PredOpt-family benchmark assets & Code/benchmark repository & MIT License for the public benchmark repository checked in the audit & Used for shortest-path and knapsack source-scope comparisons; upstream notices are preserved and no stronger SCOPE-routing claim is inferred from native PredOpt evidence. \\
\FrontierCO{} dataset card & Dataset/benchmark card & Dataset-card license marker recorded in the manifest & Used as source-scoped external benchmark evidence; no raw dataset is relicensed by this submission and no deployment-style or same-contract SCOPE routing claim is inferred. \\
\FrontierCO{} code repository & Code repository & No clear reusable software license identified in the repository front matter during the audit & Not vendored, mirrored, or required by the default reviewer path; only paper/repository citation and independently authored diagnostic wrappers or aggregate audit summaries are included. \\
CO-Bench dataset card & Dataset card & License marked unknown on the dataset card during the audit & Treated conservatively as source-bound; raw upstream data are not redistributed, automatically downloaded, or relicensed. The artifact includes only source citations, Claim Cards, and derived aggregate source-scope records. \\
CO-Bench code repository & Code repository & No clear reusable software license identified in repository front matter during the audit & Not vendored, mirrored, or required by the default reviewer path; solver-interface checks use independently authored review code and derived summaries rather than copied upstream code. \\
\DataSP{} code repository & Code repository & No clear reusable software license identified in repository front matter during the audit & Not vendored, mirrored, or required by the default reviewer path; used only for paper/repository citation, source-scope interpretation, and independently generated bridge summaries. \\
\DataSP{}-associated Warcraft/Cabspotting files & Upstream datasets & Governed by original upstream terms & Not packaged, mirrored, or automatically downloaded; the artifact reports only source-scope Claim Cards and aggregate bridge evidence that do not redistribute the upstream files. \\
\bottomrule
\end{tabularx}
\end{table}

This is a \emph{yes, with restrictions} licensing posture: creators and original sources are credited, license or terms-of-use status is documented per asset, and assets with unknown or absent reusable licenses are kept out of the redistributable and default-executable artifact. The public package therefore supports the paper's experiments through independently authored code, generated telemetry, Claim Cards, manifests, and aggregate audit summaries, not by repackaging third-party code or raw data whose reuse terms are unclear.

\FloatBarrier

\subsection{Compute Resources}
\label{app:compute_resources}

The submitted artifact is designed for a CPU-only, offline reviewer path. The default
\texttt{./run\_all.sh} analysis runs from local evidence packs without GPU training,
full CAP-DR retraining, network access, external solver-heavy sweeps, or third-party
code/data with unknown redistribution terms. It validates locally executable evidence packs, reanalyzes annotation and held-out-study records, rebuilds public source-scope summaries and derived CO-Bench RCSP solver-interface
records, and regenerates reports, manifests, and verifier outputs on a standard CPU
worker. In our artifact quickstart, the default path normally completes in about one
to two minutes. The largest reanalyzed locked object is the RouteScope-Lite request table
with 22,500 policy-row records, so the default path has modest CPU-memory requirements.
Heavier historical experiments that require GPU
training, heavyweight solvers, long wall-clock budgets, external repositories, or
upstream assets with unclear reusable licenses are included only as cached or derived
summaries for auditability and are explicitly not downloaded, redistributed, or rerun
silently by the default reviewer script.

\FloatBarrier

\subsection{Failure semantics}

A failed manifest or snapshot check invalidates adjudication-record closure and the claims that depend on it. A failed telemetry-schema check blocks every claim requiring the missing field. A failed registry rebuild blocks the closed-registry count. A failed \claims{} analysis blocks fixed-packet agreement claims; a failed reviewer-comparison or construction-audit analysis blocks only the corresponding extension-stage conclusions. A failed \real{} cached reanalysis blocks same-contract conclusion-change claims. Derived-only historical evidence is reported with its historical provenance.

\FloatBarrier

\subsection{Reviewer checks}

Table~\ref{tab:artifact_checks_app} summarizes the versioned reviewer checks. These checks regenerate analyses from their declared raw or cached inputs; they do not imply model retraining or community adoption.

\begin{table}[h]
\centering
\caption{Versioned artifact checks and execution modes.}
\label{tab:artifact_checks_app}
\tighttable
\begin{tabularx}{\textwidth}{P{0.27\textwidth}P{0.18\textwidth}Y}
\toprule
Check & Snapshot / mode & Output artifact \\
\midrule
Manifest and snapshot validation & Both / fresh & Hash and chronology report. \\
Registry and main-table generation & Reviewed / fresh & Registry, exclusions, tables, and CSV mirrors. \\
Claim Cards, schema, adapters & Reviewed / fresh & Cards plus schema and 15-adapter compliance reports. \\
\real{} analysis & Reviewed / cached reanalysis & 22,500 locked policy-row records, full 15-row table, uncertainty and selection reports. \\
\claims{} analysis & Reviewed / fresh & 120 cards, raw-label matrix, tie policies, clustered intervals, and disagreement taxonomy. \\
Reviewer comparison & Extension / fresh & 240 judgments, 36 resolved claim summaries, timing and clustered analyses. \\
Object construction audit & Extension / fresh & 24 paired encodings, field agreements, gate Jaccard, and resolution log. \\
Public source-scope audit & Reviewed / mixed & Claim Cards, shared gate trace, six source-scope rows, and RCSP solver-interface summaries. \\
\bottomrule
\end{tabularx}
\end{table}

\end{document}